\title{Reward-Weighted Regression Converges to a Global Optimum}
\author{
   Miroslav {\v{S}}trupl,\textsuperscript{\rm 1}\thanks{Equal contribution. Correspondence to struplm@idsia.ch}
   Francesco Faccio,\textsuperscript{\rm 1}\textsuperscript{$*$}
   Dylan R.~Ashley,\textsuperscript{\rm 1} \\
   Rupesh Kumar Srivastava,\textsuperscript{\rm 2}
   J{\"{u}}rgen Schmidhuber\textsuperscript{\rm 1,2,3}
}
\DeclareMathOperator*{\argmax}{arg\,max}
\DeclareMathOperator*{\ev}{\mathbb{E}}
\DeclareMathOperator*{\Indicator}{\mathbf{1}}
\newcommand{\Var}{\mathrm{Var}}
\newcommand{\de}{\,\mathrm{d}}
\newcommand{\vrho}{\mathbr{\rho}}
\newcommand{\hyscoreprime}[1][\vtheta]{\nabla_{\vrho'}\log\nu_{\vrho'}}
\newcommand{\dif}{\mathrm{d}}
\declaretheorem[name=Lemma,numberwithin=section]{lemma}
\newtheorem{definition}{Definition}
\begin{document}

\maketitle

\begin{abstract}
Reward-Weighted Regression (RWR) belongs to a family of widely known iterative Reinforcement Learning algorithms based on the Expectation-Maximization framework. In this family, learning at each iteration consists of sampling a batch of trajectories using the current policy and fitting a new policy to maximize a return-weighted log-likelihood of actions. Although RWR is known to yield monotonic improvement of the policy under certain circumstances, whether and under which conditions RWR converges to the optimal policy have remained open questions. In this paper, we provide for the first time a proof that RWR converges to a global optimum when no function approximation is used, in a general compact setting. Furthermore, for the simpler case with finite state and action spaces we prove R-linear convergence of the state-value function to the optimum.
\end{abstract}

\section{Introduction}
\label{sec:introduction}

Reinforcement learning (RL) is a branch of artificial intelligence that considers learning agents interacting with an environment~\citep{Sutton:2018:RLI:3312046}. RL has enjoyed several notable successes in recent years. These include both successes of special prominence within the artificial intelligence community---such as achieving the first superhuman performance in the ancient game of Go~\citep{silver2016mastering}---and successes of immediate real-world value---such as providing autonomous navigation of stratospheric balloons to provide internet access to remote locations~\citep{bellemare2020autonomous}.

One prominent family of algorithms that tackle the RL problem is the Reward-Weighted Regression (RWR) family~\citep{peters2007reinforcement}. The RWR family is notable in that it naturally extends to continuous state and action spaces. The lack of this functionality in many methods serves as a strong limitation. This prevents them from tackling some of the more practically relevant RL problems---such as many robotics tasks \cite{plappert2018multi}. Recently, RWR variants were able to learn high-dimensional continuous control tasks~\citep{peng2019advantage}. RWR works by transforming the RL problem into a form solvable by well-studied expectation-maximization (EM) methods~\citep{dempster1977maximum}. EM methods are, in general, guaranteed to converge to a point whose gradient is zero with respect to the parameters. However, these points could be both local minima or saddle points~\citep{wu1983convergence}. These benefits and limitations transfer to the RL setting, where it has been shown that an EM-based return maximizer is guaranteed to yield monotonic improvements in the expected return~\citep{dayan1997using}. However, it has been challenging to assess under which conditions---if any---RWR is guaranteed to converge to the optimal policy. This paper presents a breakthrough in this challenge.

The EM probabilistic framework requires that the reward obtained by the RL agent is strictly positive, such that it can be considered as an improper probability distribution. Several reward transformations have been proposed, e.g., \citet{peters2007reinforcement,peters2008learning, peng2019advantage,abdolmaleki2018maximum}, frequently involving exponential transformations. In the past, it has been claimed that a positive, strictly increasing transformation $u_{\tau}(s)$ with $\int_0^{\infty}u_{\tau}(r) \de r = \textit{const}$ would not alter the optimal solution for the MDP~\citep{peters2007reinforcement}. Unfortunately, as we demonstrate in Appendix~\ref{ap:CEsection}, this is not the case. The consequence of this is that we cannot rely on those transformations if we want prove convergence. Therefore, we consider only linear transformation of the reward. A possible disadvantage of relying on linear transformations is that it is necessary to know a lower bound on the reward to construct such a transformation.

In this work, we provide the first proof of RWR's global convergence in a setting without function approximation or reward transformations\footnote{Note that---without loss of generality---we do assume here that a linear reward transformation is already provided, such that the reward is positive}. The paper is structured as follows: Section~\ref{sec:background} introduces the MDP setting and other preliminary material; Section~\ref{sec:rwr} presents a closed-form update for RWR based on the state and action-value functions and Section~\ref{sec:mit} shows that the update induces monotonic improvement related to the variance of the action-value function with respect to the action sampled by the policy; \textbf{Section~\ref{sec:conv} proves global convergence of the algorithm in the general compact setting and convergence rates in the finite setting}; Section~\ref{sec:exp} illustrates experimentally that---for a simple MDP---the presented update scheme converges to the optimal policy; Section~\ref{sec:related_work} discusses related work; and Section~\ref{sec:conclusion} concludes.

\section{Background}
\label{sec:background}

Here we consider a Markov Decision Process (MDP)~\citep{stratonovich1960,puterman2014markov}
$\mathcal{M}=(\mathcal{S},\mathcal{A},p_T,R,\gamma,\mu_0)$. We assume that the state and action spaces $\mathcal{S}\subset\mathbb{R}^{n_S}$,
$\mathcal{A}\subset\mathbb{R}^{n_A}$ are compact sub-spaces \footnote{This allows for state and action vectors that have discrete, continuous, or mixed components.}
(equipped with subspace topology), with measurable structure
given by measure spaces
$(\mathcal{S},\mathcal{B}(\mathcal{S}),\mu_S)$,
$(\mathcal{A},\mathcal{B}(\mathcal{A}),\mu_A)$
where $\mathcal{B}(\cdot)$ denotes the
Borel $\sigma$-algebra after completion, and reference measures $\mu_S$, $\mu_A$ are assumed to be finite and strictly positive on $\mathcal{S},\mathcal{A}$ respectively.
The distributions of state (action) random variables (except in Section~\ref{sec:conv} where greedy policies are used) are assumed to be dominated by $\mu_S$ ($\mu_A$), thus
having a density with respect to $\mu_S$ ($\mu_A$).
Therefore, we reserve symbols $\dif s, \dif a$ in integral expression not to integration with respect to Lebesgue measure, as usual, but to integration with respect to $\mu_S$ and $\mu_A$ respectively, e.g. $\int_{\mathcal{S}} (\cdot) \dif s := \int_{\mathcal{S}} (\cdot) \dif \mu_S(s) $.
Let $(\Omega,\mathcal{F},\mu)$ be a measure space
and $f:\Omega \rightarrow \mathbb{R}^+$ a $\mathcal{F}$ measurable function
(density). We denote by $f\cdot\mu$ the measure which assigns to every set
$B \in \mathcal{F}$ a measure $f\cdot\mu(B) := \int_B f \dif \mu$.

In the MDP framework, at each step, an agent observes a state $s \in \mathcal{S}$, chooses an action $a \in \mathcal{A}$, and subsequently transitions into state $s'$ with probability density $p_T(s' | s, a)$
to receive a deterministic reward $R(s,a)$.
The transition probability kernel is assumed to be continuous in total variation in $(s,a) \in \mathcal{S}\times \mathcal{A}$ (the product topology is assumed on $\mathcal{S}\times \mathcal{A}$), and thus the density $p_T(s' | s, a)$ is continuous (in $\|\cdot\|_1$ norm). $R(s,a)$ is assumed to be a continuous function on $\mathcal{S}\times \mathcal{A}$.

The agent starts from an initial state (chosen under a probability density $\mu_0(s)$) and is represented by a stochastic policy $\pi$: a probability kernel which provides the conditional probability distribution of performing action $a$ in state $s$.\footnote{In Sections~\ref{sec:rwr} and~\ref{sec:mit}, a policy is given through its conditional density with respect to $\mu_A$. We also refer to this density as a policy.} The policy is deterministic if, for each state $s$, there exists an action $a$ such that $\pi(\{a\}|s) = 1$. The return $R_t$ is defined as the cumulative discounted reward from time step t:
$R_t =  \sum_{k=0}^{\infty}\gamma^k R(s_{t+k+1}, a_{t+k+1})$
where $\gamma \in (0,1)$ is a discount factor. We discuss the undiscounted case ($\gamma=1$) in Appendix \ref{ap:MITsection}, which covers the scenario with absorbing states.

The agent's performance is measured by the cumulative discounted expected reward (i.e., the expected return), defined as  $J(\pi)={\ev}_{\pi}[R_0].$ The state-value function $V^{\pi}(s) = {\ev}_{\pi}[R_t|s_t=s]$ of a policy $\pi$ is defined as the expected return for being in a state $s$ while following $\pi$. The maximization of the expected cumulative reward can be expressed in terms of the state-value function by integrating it over the state space $\mathcal{S}$: $J(\pi)=\int_{\mathcal{S}} \mu_0(s) V^{\pi}(s) \de s$.
The action-value function $Q^{\pi}(s,a)$---defined as the expected return for performing action $a$ in state $s$ and following a policy $\pi$---is $Q^{\pi}(s,a) = {\ev}_{\pi}[R_t|s_t=s,a_t=a]$. State and action value functions are related by $V^{\pi}(s) = \int_{\mathcal{A}} \pi(a|s) Q^{\pi}(s,a)  \de a$. We define as $d^{\pi}(s')$ the discounted weighting of states encountered starting at $s_0 \sim \mu_0(s)$ and following the policy $\pi$: $d^{\pi}(s') = \int_{\mathcal{S}}\sum_{t=1}^{\infty} \gamma^{t-1} \mu_0(s) p_{s_t|s_0,\pi}(s'|s) \de s $, where $p_{s_t|s_0,\pi}(s'|s)$ is the probability density of transitioning to $s'$ after t time steps, starting from $s$ and following policy $\pi$. We assume that the reward function $R(s,a)$ is strictly positive\footnote{It is enough to assume that the reward is bounded, so it can be linearly mapped to a positive value.}, so that state and action value functions are also bounded $V^{\pi}(s)\leq \frac{1}{1-\gamma}||R||_{\infty} = B_V < +\infty$. We define the operator\footnote{W maps to continuous functions since $R(s,a)$ is continuous and continuity of the integral follows from continuity of $p_T$ in $\|\cdot\|_1$ norm and boundedness of $V$.}
$W :L_{\infty}(\mathcal{S}) \rightarrow C(\mathcal{S}\times \mathcal{A})$
as $[W(V)](s,a) :=  R(s,a)+ \gamma \int_{\mathcal{S}} V(s') p_T(s'|s,a) \dif s'$ and the Bellman's optimality operator $T :L_{\infty}(\mathcal{S}\times \mathcal{A}) \rightarrow C(\mathcal{S}\times \mathcal{A})$ as $[T(Q)](s,a) :=  R(s,a) + \gamma \int_{\mathcal{S}}\max_{a'} Q(s',a') p_T(s'|s,a) \dif s'$. An action-value function $Q^{\pi}$ is optimal if it is the unique fixed point for $T$. If $Q^{\pi}$ is optimal, then $\pi$ is an optimal policy.

\section{Reward-Weighted Regression}
\label{sec:rwr}

Reward-Weighted Regression (RWR, see \citep{dayan1997using},\citep{peters2007reinforcement},\citep{peng2019advantage}) is an iterative algorithm which consists of two main steps. First, a batch of episodes is generated using the current policy $\pi_n$  (all policies in this section are given as conditional densities with respect to $\mu_A$). Then, a new policy is fitted to (using supervised learning under maximum likelihood criterion) a sample representation of the conditional distribution of an action given a state, weighted by the return. The RWR optimization problem is:
\begin{multline}
    \pi_{n+1} = \argmax_{\pi \in \Pi} \ev_{s \sim d^{\pi_n}(\cdot), a \sim \pi_{n}(\cdot|s)} \big[ \\ \ev_{R_t \sim p(\cdot|s_t = s, a_t = a, \pi_n)} \left[ R_t \log \pi(a|s) \right] \big],
\end{multline}
where $\Pi$ is the set of all conditional probability densities (meant with respect to $\mu_A$)\footnote{We can restrict to talk about probability kernels dominated by $\mu_A$ instead of all probability kernels thanks to Lebesgue decomposition.}.
Notice that $\pi_{n+1}$ is defined correctly as its expression does not depend on $t$.
This is equivalent to the following:
\begin{equation}
    \pi_{n+1} = \argmax_{\pi \in \Pi} \ev_{s \sim d^{\pi_n}(\cdot), a \sim \pi_{n}(\cdot|s)} \left [Q^{\pi_n}(s,a) \log \pi(a|s) \right].
\end{equation}
We start by deriving a closed form solution to the optimization problem:
\begin{restatable}[]{thr}{update}
\label{th:multbyQ}
Let $\pi_0$ be an initial policy and let $\forall s \in \mathcal{S}, \forall a \in \mathcal{A}$ $R(s,a)>0$. At each iteration $n>0$, the solution of the RWR optimization problem is:
\begin{equation}
    \pi_{n+1}(a|s) = \frac{Q^{\pi_n}(s,a) \pi_{n}(a|s)}{V^{\pi_n}(s)}.
\end{equation}
\end{restatable}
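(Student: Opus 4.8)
The plan is to turn the global maximisation over the set $\Pi$ of conditional densities into a family of pointwise (in $s$) problems and to solve each of those with the Gibbs (information) inequality. Writing the RWR objective of Theorem~\ref{th:multbyQ} as an explicit integral,
\begin{equation*}
\ev_{s\sim d^{\pi_n},\,a\sim\pi_n(\cdot|s)}\!\big[Q^{\pi_n}(s,a)\log\pi(a|s)\big]=\int_{\mathcal{S}}d^{\pi_n}(s)\Big(\int_{\mathcal{A}}\pi_n(a|s)\,Q^{\pi_n}(s,a)\log\pi(a|s)\,\de a\Big)\de s,
\end{equation*}
the key observation is that membership in $\Pi$ constrains the slices $\pi(\cdot|s)$ only through the per-state normalisation $\int_{\mathcal{A}}\pi(a|s)\,\de a=1$, and that $d^{\pi_n}(s)\ge 0$. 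Hence a kernel that, for $d^{\pi_n}$-almost every $s$, maximises the inner bracket over all probability densities on $\mathcal{A}$ also maximises the outer integral, and any maximiser of the outer integral must agree with it for $d^{\pi_n}$-a.e.\ $s$.

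For fixed $s$ put $w_s(a):=\pi_n(a|s)\,Q^{\pi_n}(s,a)\ge 0$. Because $R$ is continuous and strictly positive on the compact set $\mathcal{S}\times\mathcal{A}$ it is bounded below by some $c>0$, and $Q^{\pi_n}\ge R$ together with the identity $V^{\pi_n}(s)=\int_{\mathcal{A}}\pi_n(a|s)Q^{\pi_n}(s,a)\,\de a$ from Section~\ref{sec:background} gives $V^{\pi_n}(s)=\int_{\mathcal{A}}w_s\,\de a\ge c>0$. Thus $\bar w_s:=w_s/V^{\pi_n}(s)$ is a probability density on $\mathcal{A}$, and for every $\pi(\cdot|s)\in\Pi$ the decomposition $\log\pi(a|s)=\log\bar w_s(a)-\log\frac{\bar w_s(a)}{\pi(a|s)}$ yields
\begin{equation*}
\int_{\mathcal{A}}w_s(a)\log\pi(a|s)\,\de a=\int_{\mathcal{A}}w_s(a)\log\bar w_s(a)\,\de a-V^{\pi_n}(s)\,D_{\mathrm{KL}}\!\big(\bar w_s\,\big\|\,\pi(\cdot|s)\big),
\end{equation*}
where the first term does not depend on $\pi$ and the Kullback--Leibler divergence is non-negative and vanishes iff $\pi(\cdot|s)=\bar w_s$ $\mu_A$-almost everywhere. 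Therefore the inner bracket is maximised exactly at $\pi(a|s)=\bar w_s(a)=Q^{\pi_n}(s,a)\pi_n(a|s)/V^{\pi_n}(s)$; combined with the reduction of the first paragraph this is the stated closed form. (The same computation is what a formal Lagrange-multiplier argument for the constraint $\int_{\mathcal{A}}\pi(a|s)\,\de a=1$ produces, with multiplier $\lambda(s)=V^{\pi_n}(s)$.)

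It remains to discharge some measure-theoretic bookkeeping. One checks that the right-hand side is a legitimate member of $\Pi$: it is jointly $\mathcal{B}(\mathcal{S})\otimes\mathcal{B}(\mathcal{A})$-measurable as a ratio of measurable functions with strictly positive measurable denominator, is non-negative, and integrates to $1$ in $a$ for every $s$. The maximiser is pinned down only on $\supp(d^{\pi_n})$; off that set any density is optimal, and we adopt the same formula as the canonical choice, which is what the theorem records. The step I expect to be the real obstacle is making the ``optimise under the integral sign'' argument rigorous---i.e.\ justifying the interchange of $\sup$ and $\int\de s$ and the existence of a jointly measurable optimiser (a normal-integrand / measurable-selection argument)---and, in the same vein, ensuring the decomposition above never degenerates into $\infty-\infty$; boundedness of $Q^{\pi_n}$ and compactness of $\mathcal{A}$ are the structural facts that make this controllable, but the details need care. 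The Gibbs inequality itself is the easy part.
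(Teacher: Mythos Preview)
Your proposal is correct and follows essentially the same route as the paper: both write the objective as $\int_{\mathcal{S}} d^{\pi_n}(s)\big(\int_{\mathcal{A}} \pi_n(a|s)Q^{\pi_n}(s,a)\log\pi(a|s)\,\de a\big)\de s$, reduce to the per-$s$ inner problem, and solve it by the Gibbs inequality---the paper simply phrases your KL decomposition as ``$f(a|s)$ is the maximum likelihood fit'' and arrives at the same conditional $Q^{\pi_n}\pi_n/V^{\pi_n}$, with the same handling of the $d^{\pi_n}(s)=0$ case. The only cosmetic difference is that the paper first normalises the joint weight $d^{\pi_n}(s)\pi_n(a|s)Q^{\pi_n}(s,a)$ and then conditions, whereas you normalise directly in $a$; your explicit lower bound $V^{\pi_n}(s)\ge c>0$ and the measurable-selection caveats you flag are not addressed in the paper either, so your argument is at least as complete.
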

\begin{proof}
\begin{multline*}
    \pi_{n+1} = \argmax_{\pi \in \Pi} \int_{\mathcal{S}} d^{\pi_n}(s)
    \\ \times
    \int_{\mathcal{A}} \pi_{n}(a|s) Q^{\pi_n}(s,a) \log \pi(a|s) \de a \de s.
\end{multline*}
Define $\hat{f}(s,a) :=d^{\pi_n}(s) \pi_{n}(a|s) Q^{\pi_n}(s,a) $. $\hat{f}(s,a)$ can be normalized such that it becomes a density that we fit by $\pi_{n+1}$:
\begin{align*}
    f(s,a) &= \frac{\hat{f}(s,a)}{ \int_{\mathcal{S}} \int_{\mathcal{A}} \hat{f}(s,a) \de a \de s} \\ &= \frac{d^{\pi_n}(s) \pi_{n}(a|s) Q^{\pi_n}(s,a)}{\int_{\mathcal{S}} \int_{\mathcal{A}} d^{\pi_n}(s) \pi_{n}(a|s) Q^{\pi_n}(s,a) \de a \de s}.
\end{align*}
For the function to be maximized we have:
\begin{multline*}
    \int_{\mathcal{S}} \int_{\mathcal{A}}f(s,a) \log \pi(a|s) \de a \de s =
    \\
    \begin{aligned}
    &= \int_{\mathcal{S}} f(s) \int_{\mathcal{A}}f(a|s) \log \pi(a|s) \de a \de s \\
    &\leq  \int_{\mathcal{S}} f(s) \int_{\mathcal{A}}f(a|s) \log f(a|s) \de a \de s,
    \end{aligned}
\end{multline*}
where the last inequality holds for any policy $\pi$, since $\forall s \in \mathcal{S}$ we have that $ \int_{\mathcal{A}}f(a|s) \log \pi(a|s) \de a \leq  \int_{\mathcal{A}}f(a|s) \log f(a|s) \de a$, as $f(a|s)$ is the maximum likelihood fit. Note that for all states $s \in \mathcal{S}$ such that $d^{\pi_n}(s)=0$, we have that $f(s,a)=0$. Therefore, for such states, the policy will not contribute to the objective and can be defined arbitrarily. Now, assume $d^{\pi_n}(s)>0$. The objective function achieves a maximum when the two distributions are equal:
\begin{multline*}
    \pi_{n+1}(a|s) = f(a|s) = \frac{f(s,a)}{f(s)} =  \frac{f(s,a)}{\int_{\mathcal{A}}f(s,a) \de a} = \\
    \begin{aligned}
    & = \frac{d^{\pi_n}(s) \pi_{n}(a|s) Q^{\pi_n}(s,a)}{\int_{\mathcal{S}} \int_{\mathcal{A}} d^{\pi_n}(s) \pi_{n}(a|s) Q^{\pi_n}(s,a) \de a \de s}
    \\ 
    & \quad\cdot \frac{\int_{\mathcal{S}} \int_{\mathcal{A}} d^{\pi_n}(s) \pi_n(a|s) Q^{\pi_n}(s,a) \de a \de s}{\int_{\mathcal{A}} d^{\pi_n}(s) \pi_{n}(a|s) Q^{\pi_n}(s,a) \de a}
    \\
    & = \frac{\pi_{n}(a|s) Q^{\pi_n}(s,a)}{\int_{\mathcal{A}} \pi_n(a|s) Q^{\pi_n}(s,a) \de a} = \frac{Q^{\pi_n}(s,a) \pi_n(a|s) }{V^{\pi_n}(s)}.
    \end{aligned}
\end{multline*}
We can now set $\pi_{n+1}(a|s) = \frac{Q^{\pi_n}(s,a) \pi_{n}(a|s)}{V^{\pi_n}(s)}$ also for all $s$ such that $d^{\pi_n}(s)=0$, which completes the proof. Note that $V^{\pi_n}(s)$ is positive thanks to the assumption of positive rewards. Similarly, the denominator $\int_{\mathcal{S}} \int_{\mathcal{A}} \hat{f}(s,a) \de a \de s = \int_{\mathcal{S}}  d^{\pi_n}(s) V^{\pi_n}(s) \de s > 0$ is positive.
\end{proof}
When function approximation is used for policy $\pi$, the term $f(s)$ weighs the mismatch between $\pi(a| s)$ and $f(a| s)$. Indeed, we have $f(s) \propto d^{\pi}(s)V^{\pi}(s) $, suggesting that the error occurring with function approximation would be weighted more for states visited often and with a bigger value. In our setting, however, the two terms are equal since no function approximation is used.\\
Theorem \ref{th:multbyQ} provides us with an interpretation on how the RWR update rule works: at each iteration, given a state $s$, the probability over an action $a$ produced by policy $\pi_{n}$ will be weighted by the expected return obtained from state $s$, choosing action $a$ and following $\pi_n$. This result will be then normalized by $V^{\pi_n}(s)$, providing a new policy $\pi_{n+1}$. Alternatively, we can interpret this new policy as the fraction of return obtained by policy $\pi_n$ from state $s$, after choosing action $a$ with probability $\pi_n(\cdot|s)$. Intuitively, assigning more weight to actions which lead to better return should improve the policy. We prove this in the next section.

\section{Monotonic Improvement Theorem}
\label{sec:mit}

Here we prove that the update defined in Theorem \ref{th:multbyQ} leads to monotonic improvement.\footnote{The case where the MDP has non-negative rewards
and the undiscounted case are more complex and
treated in Appendix~\ref{ap:MITsection}.}
\begin{restatable}[]{thr}{mit}
\label{th:mit}
Fix $n > 0$ and let $\pi_0 \in \Pi$ be a policy\footnote{Also in this section all policies are given as conditional densities with respect to $\mu_A$.}. Assume $\forall s \in \mathcal{S}, \forall a \in \mathcal{A}$, $R(s,a)>0$. Define the operator $B : \Pi \rightarrow \Pi$ such that
$B(\pi) := \frac{Q^{\pi}\pi}{V^{\pi}}$ for $\pi \in \Pi$. Thus
$\pi_{n+1} = B(\pi_n)$, i.e. $\forall s \in \mathcal{S},
\forall a \in \mathcal{A}:\:\pi_{n+1}(a|s) = (B\pi_n)(a|s) = \frac{Q^{\pi_n}(s,a)\pi_n(a|s)}{V^{\pi_n}(s)}$.
Then $\forall s \in \mathcal{S},
\forall a \in \mathcal{A}$ we have that $ V^{\pi_{n+1}}(s) \geq V^{\pi_n}(s)$ and $ Q^{\pi_{n+1}}(s,a) \geq Q^{\pi_n}(s,a)$. Moreover, if for some $s \in \mathcal{S}$ holds $\Var_{a \sim \pi_n(a|s)} [Q^{\pi_n}(s,a)] > 0$ then the first inequality above is strict, i.e.
$ V^{\pi_{n+1}}(s) > V^{\pi_n}(s)$.
\end{restatable}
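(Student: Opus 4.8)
The plan is to show $V^{\pi_{n+1}} \geq V^{\pi_n}$ pointwise by exhibiting the one-step improvement and then bootstrapping via the Bellman recursion. First I would compute the "one-step lookahead" quantity: using $\pi_{n+1}(a|s) = Q^{\pi_n}(s,a)\pi_n(a|s)/V^{\pi_n}(s)$, I would evaluate
\begin{equation*}
\int_{\mathcal{A}} \pi_{n+1}(a|s) Q^{\pi_n}(s,a)\, \de a
= \frac{1}{V^{\pi_n}(s)}\int_{\mathcal{A}} \pi_n(a|s) \big(Q^{\pi_n}(s,a)\big)^2\, \de a
= \frac{\ev_{a\sim\pi_n(\cdot|s)}\big[(Q^{\pi_n}(s,a))^2\big]}{V^{\pi_n}(s)}.
\end{equation*}
Since $V^{\pi_n}(s) = \ev_{a\sim\pi_n(\cdot|s)}[Q^{\pi_n}(s,a)]$, the ratio above equals $V^{\pi_n}(s) + \Var_{a\sim\pi_n(\cdot|s)}[Q^{\pi_n}(s,a)]/V^{\pi_n}(s)$ by the identity $\ev[X^2] = (\ev X)^2 + \Var(X)$. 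Because rewards are strictly positive, $V^{\pi_n}(s) > 0$, so this shows $\int_{\mathcal{A}} \pi_{n+1}(a|s) Q^{\pi_n}(s,a)\, \de a \geq V^{\pi_n}(s)$, with strict inequality exactly when $\Var_{a\sim\pi_n(\cdot|s)}[Q^{\pi_n}(s,a)] > 0$. This is the crux of the argument and the source of the variance term in the statement.

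Next I would turn this one-step gain into a global (full-horizon) improvement. The natural tool is a policy-improvement / policy-difference argument: define the Bellman evaluation operator for $\pi_{n+1}$, namely $(\mathcal{T}^{\pi_{n+1}} V)(s) := \int_{\mathcal{A}} \pi_{n+1}(a|s)\big[R(s,a) + \gamma\int_{\mathcal{S}} V(s')p_T(s'|s,a)\,\de s'\big]\,\de a = \int_{\mathcal{A}} \pi_{n+1}(a|s)[W(V)](s,a)\,\de a$. The computation above, applied with $W(V^{\pi_n}) = Q^{\pi_n}$, shows $(\mathcal{T}^{\pi_{n+1}} V^{\pi_n})(s) \geq V^{\pi_n}(s)$ for all $s$. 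Since $\mathcal{T}^{\pi_{n+1}}$ is monotone (it has nonnegative coefficients) and is a $\gamma$-contraction with unique fixed point $V^{\pi_{n+1}}$, iterating $\mathcal{T}^{\pi_{n+1}}$ starting from $V^{\pi_n}$ gives a nondecreasing sequence converging to $V^{\pi_{n+1}}$; hence $V^{\pi_{n+1}}(s) \geq V^{\pi_n}(s)$ for every $s$. The claim for $Q$ then follows immediately by applying $W$ (which is also monotone) to both sides: $Q^{\pi_{n+1}} = W(V^{\pi_{n+1}}) \geq W(V^{\pi_n}) = Q^{\pi_n}$.

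Finally, for the strictness claim: if $\Var_{a\sim\pi_n(\cdot|s)}[Q^{\pi_n}(s,a)] > 0$ at a particular $s$, then $(\mathcal{T}^{\pi_{n+1}} V^{\pi_n})(s) > V^{\pi_n}(s)$ strictly at that $s$; combined with $\mathcal{T}^{\pi_{n+1}} V^{\pi_n} \geq V^{\pi_n}$ everywhere and monotonicity, passing to the limit preserves the strict inequality at $s$ (the first application already gives $V^{\pi_{n+1}}(s) \geq (\mathcal{T}^{\pi_{n+1}}V^{\pi_n})(s) > V^{\pi_n}(s)$ since subsequent iterates only increase the value). The main obstacle I anticipate is purely technical rather than conceptual: one must check that all the integrals are well-defined and finite in the general compact (non-finite) setting — boundedness of $V^{\pi_n}$ by $B_V$ and continuity of $R$, $p_T$ handle this — and that $\mathcal{T}^{\pi_{n+1}}$ genuinely maps the relevant function space to itself with the contraction and monotonicity properties; once those measure-theoretic details are in place, the argument is the one sketched above.
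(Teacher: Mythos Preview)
Your proposal is correct and follows essentially the same route as the paper: compute the one-step lookahead $\int_{\mathcal{A}}\pi_{n+1}(a|s)Q^{\pi_n}(s,a)\,\de a$, reduce the inequality $\geq V^{\pi_n}(s)$ to $\Var_{a\sim\pi_n(\cdot|s)}[Q^{\pi_n}(s,a)]\geq 0$, then invoke the standard policy-improvement bootstrapping (the paper cites Puterman, you spell out the $\mathcal{T}^{\pi_{n+1}}$ iteration) to obtain $V^{\pi_{n+1}}\geq V^{\pi_n}$ and finally $Q^{\pi_{n+1}}\geq Q^{\pi_n}$ via $W$. The only cosmetic difference is that you write the one-step gain explicitly as $\Var/V^{\pi_n}$ whereas the paper presents it as a chain of equivalences ending in $\Var\geq 0$; the strictness argument is handled identically.
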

\begin{proof}
We start by defining a function $V^{\pi_{n+1}, \pi_n}(s)$ as the expected return for using policy $\pi_{n+1}$ in state $s$ and then following policy $\pi_n$: $V^{\pi_{n+1}, \pi_n}(s):= \int_{\mathcal{A}} \pi_{n+1}(a|s) Q^{\pi_n}(s,a) \de a$. By showing that $\forall s \in \mathcal{S}$,  $V^{\pi_{n+1}, \pi_n}(s) \geq V^{\pi_n}(s)$, we get that $\forall s \in \mathcal{S}$, $V^{\pi_{n+1}}(s) \geq V^{\pi_n}(s)$.
\footnote{
The argument is the same as given in \citep{puterman2014markov}, see section on Monotonic Policy Improvement.
}
\\
Now, let $s$ be fixed:
\begin{align*}
    & V^{\pi_{n+1}, \pi_n}(s) \geq V^{\pi_n}(s) \\
    \Leftrightarrow & \int_{\mathcal{A}} \pi_{n+1}(a|s)Q^{\pi_n}(s,a) \de a \geq \int_{\mathcal{A}}\pi_n(a|s)Q^{\pi_n}(s,a) \de a\\
    \Leftrightarrow & \int_{\mathcal{A}} \frac{\pi_n(a|s)Q^{\pi_n}(s,a)^2}{V^{\pi_n}(s)} \de a \geq \int_{\mathcal{A}}\pi_n(a|s)Q^{\pi_n}(s,a) \de a\\
    \Leftrightarrow & \int_{\mathcal{A}} \pi(a|s)Q^{\pi_n}(s,a)^2 \de a \geq \Big (\int_{\mathcal{A}}\pi_n(a|s)Q^{\pi_n}(s,a) \de a\Big)^2\\
    \Leftrightarrow & \ev_{a \sim \pi_n(a|s)} [Q^{\pi_n}(s,a)^2] \geq \ev_{a \sim \pi_n(a|s)} [Q^{\pi_n}(s,a)]^2\\
    \Leftrightarrow & \Var_{a \sim \pi_n(a|s)} [Q^{\pi_n}(s,a)] \geq 0,
\end{align*}
which always holds. Finally, $ \forall s \in \mathcal{S}$, $\forall a \in \mathcal{A}$:
\begin{multline*}
    Q^{\pi_{n+1}}(s,a) = \\
    \begin{aligned}
    &= R(s,a) + \gamma \int_{\mathcal{S}}p_T(s'|s,a) V^{\pi_{n+1}}(s')\de s'
    \\
    &\geq R(s,a) + \gamma \int_{\mathcal{S}}p_T(s'|s,a) V^{\pi_n}(s')\de s'
    \\
    &= Q^{\pi_n}(s,a).
    \end{aligned}
\end{multline*}
\end{proof}
Theorem~\ref{th:mit} provides a relationship between the improvement in the state-value function and the variance of the action-value function with respect to the actions sampled. Note that if at a certain point the policy becomes deterministic---or it becomes the greedy policy of its action-value function (i.e. the optimal policy)---, then the operator B will map the policy to itself and there will be no improvement.

\section{Convergence Results}
\label{sec:conv}

\subsection{Weak convergence in topological factor}
It is worth discussing what type of convergence we can achieve by iterating the $B$-operator $\pi_n := B(\pi_{n-1})$, where $\pi_n$ are probability densities with respect to a fixed reference measure $\mu_A$.

Consider first the classic "continuous" variable case, where $\mu_A$ is the Lebesgue measure and fix $s \in S$.
Optimal policies are known to be greedy on the optimal action-value function
$Q^*(s,a)$. That is,  they concentrate all mass on $\argmax_a Q^*(s,a)$. If $\argmax_a Q^*(s,a)$ consists of just a single point $\{a^*\}$, then the optimal policy (measure), $\pi^*(\cdot|s)$ for $s$, concentrates all its mass in $\{a^*\}$. This means that the optimal policy does not have a density with respect to the Lebesgue measure. Furthermore $(\pi_n(\cdot|s)\cdot\mu_A) (\{a^*\}) = \int_{\{a^*\}} \pi_n(a|s) \dif \mu_A(a) = 0$, while $\pi^*(\{a^*\}|s) = 1$. However, we still want to show that the measures $\pi_{n}(\cdot|s)\cdot \mu_A$ get concentrated in the neighbourhood of $a^*$ and that this neighbourhood gets tinier as $n$ increases. We will use the concept of weak convergence to prove this.

Another problem arises when considering the above: since $\argmax_a Q^*(s,a)$ can consist of multiple points, the set of optimal policies is $\mathcal{P}(\argmax_a Q^*(s,a))$, where $\mathcal{P}(F) := \{\mu :\mu\:\text{is a probability measure on}\:\mathcal{B}(\mathcal{A}), \mu(F) = 1\}$ for a $F \in \mathcal{B}(\mathcal{A})$. We want to prove convergence even when the sequence of policies $\pi_n$ oscillates near $\mathcal{P}(\argmax_a Q^*(s,a))$. A way of coping with this is to make
$\argmax_a Q^*(s,a)$ a single point through topological factorisation, to obtain the limit by working in a quotient space.
The notion of convergence we will be using is described in the following definition.

\begin{definition}
\label{de:Rweakconv}
(Weak convergence of measures in metric space relative to a compact set)
Let $(X,d)$ be a metric space, $F \subset X$ a compact subset, $\mathcal{B}(X)$
its Borel $\sigma$-algebra.
Denote $(\tilde{X},\tilde{d})$
a metric space resulting as a topological quotient with respect to $F$ and $\nu$ the quotient
map $\nu:X \rightarrow \tilde{X}$ (see Lemma \ref{le:quotient} for details).
A sequence of probability measures $P_n$ is said to converge weakly
relative to $F$ to a measure $P$ denoted
$$
P_n \rightarrow^{w(F)} P,
$$
if and only if the image measures of $P_n$ under $\nu$ converge weakly to the
image measure of $P$ under $\nu$:
$$
\nu P_n \rightarrow^{w} \nu P.
$$
\end{definition}
Note that the limit is meant to be unique just in quotient space, thus if $P$
is a  weak limit (relative to $F$) of a sequence $(P_n)$, then also
all measures $P'$ for which $\nu P' = \nu P$ are relatively weak limits, i.e. $P'|_{\mathcal{B}(X)\cap F^c} = P|_{\mathcal{B}(X)\cap F^c}$.
Thus, they can differ on $\mathcal{B}(X)\cap F$. While the total mass assigned to $F$
must be the same for $P$ and $P'$, the distribution of masses inside $F$ may differ.
\subsection{Main results}
Consider for all $n>0$ the sequence generated by $\pi_n := B(\pi_{n-1})$.
For convenience, for all $n\geq 0$, we define  $Q_n := Q_{\pi_n},\quad V_n := V_{\pi_n}$.
First we note that, since the reward is bounded, the monotonic sequences of value functions converge point-wise to a limit:
\begin{align*}
(\forall s \in \mathcal{S})&: V_n(s) \nearrow V_L(s) \leq B_V < +\infty\\
(\forall s \in \mathcal{S}, a \in \mathcal{A} )&:Q_n(s,a) \nearrow Q_L(s,a) \leq B_V < +\infty,
\end{align*}
where $B_V = \frac{1}{1-\gamma}||R||_{\infty}$. Further $\forall n$ $Q_n$ is continuous since $Q_n = W(V_n)$ and $W$ maps all bounded functions to continuous functions.

The convergence proof proceeds in four steps:
\begin{enumerate}
    \item First we show in Lemma~\ref{le:bellwl} that $Q_L$ can be expressed in terms of $V_L$ through $W$ operator. This helps when showing that $Q_n$ converges uniformly to $Q_L$.
    \item Then we demonstrate in Lemma~\ref{le:greedylim} that $\forall s\in \mathcal{S}$ the sequence of policy measures $\pi_{n}(\cdot|s)\cdot\mu_A$ converges weakly relative to the set $M(s) :=\argmax_a Q_L(s,a)$ to a measure that assigns all probability mass to greedy actions of $Q_L(\cdot,s)$, i.e. $\pi_n(\cdot|s)\cdot\mu_A \rightarrow^{w(M(s))} \pi_L(\cdot|s) \in \mathcal{P}(M(s))$.
    However we are interested just in those $\pi_L$ which are kernels, i.e.
    $\pi_L \in \Pi_L := \{ \pi_L' : \pi_L'\:\text{is a probability kernel from $(\mathcal{S},\mathcal{B}(\mathcal{S}))$ to $(\mathcal{A},\mathcal{B}(\mathcal{A}))$} ,$
    $\forall s\in \mathcal{S}, \: \pi_L'(.|s) \in  \mathcal{P}(M(s))\}$ --- the set of all greedy policies on $Q_L$.
    \item At this point we do not know yet if $Q_L$ and $V_L$ are the value functions of $\pi_L$. We prove this in Lemma~\ref{le:Bseqprop} (together with previous Lemmas) by showing that they are fixed points of the Bellman operator.
    \item Finally, we state the main results in Theorem~\ref{th:CT}. Since $V_L$ and $Q_L$ are value functions for $\pi_L$ and $\pi_L$ is greedy with respect to $Q_L$, then $Q_L$ is the unique fixed point of the Bellman's optimality operator:
    \begin{multline*}
    Q_L(s,a)= [T(Q)](s,a) = \\
    = R(s,a) + \gamma \int_{\mathcal{S}}\max_{a'} Q(s',a') p_T(s'|s,a) \dif s'.
    \end{multline*}
    Therefore $Q_L$ and $V_L$ are optimal value functions and $\pi_L$ is an optimal policy for the MDP.
\end{enumerate}

\begin{lemma}\label{le:bellwl}
The following holds:\\
\textbf{1.} $Q_L = W(V_L)$,\\
\textbf{2.} $Q_L$ is continuous,\\
\textbf{3.} $Q_n$ converges to $Q_L$ uniformly.
\end{lemma}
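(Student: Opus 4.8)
The plan is to pass to the limit in the recursion $Q_n = W(V_n)$, then use the smoothing property of the operator $W$ to obtain continuity of the limit, and finally invoke Dini's theorem on the compact space $\mathcal{S}\times\mathcal{A}$. The three parts must be established in the stated order, because Dini's theorem requires the pointwise limit to be known continuous in advance.

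For part 1, I would fix $(s,a)\in\mathcal{S}\times\mathcal{A}$ and start from $Q_n(s,a) = R(s,a) + \gamma\int_{\mathcal{S}} V_n(s')\,p_T(s'|s,a)\,\dif s'$. The left-hand side tends to $Q_L(s,a)$ by definition of the pointwise limit. On the right-hand side, $V_n \nearrow V_L$ pointwise with $V_n \le B_V$, and $p_T(\cdot|s,a)\cdot\mu_S$ is a probability measure, so the monotone convergence theorem (equivalently, dominated convergence with the constant dominating function $B_V$) lets me interchange limit and integral, yielding
\[
Q_L(s,a) = R(s,a) + \gamma\int_{\mathcal{S}} V_L(s')\,p_T(s'|s,a)\,\dif s' = [W(V_L)](s,a).
\]
Here I use that $V_L$, being a pointwise limit of measurable functions bounded by $B_V$, lies in $L_\infty(\mathcal{S})$, so that $W(V_L)$ is well defined.

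Part 2 is then immediate: $W$ maps $L_\infty(\mathcal{S})$ into $C(\mathcal{S}\times\mathcal{A})$ by continuity of $R$, continuity of $p_T$ in $\|\cdot\|_1$ norm, and boundedness of the integrand; since $V_L\in L_\infty(\mathcal{S})$, part 1 gives $Q_L = W(V_L)\in C(\mathcal{S}\times\mathcal{A})$. For part 3, observe that $(Q_n)$ is a nondecreasing sequence of continuous functions on the compact metric space $\mathcal{S}\times\mathcal{A}$ converging pointwise to the continuous function $Q_L$; Dini's theorem then yields uniform convergence $Q_n \to Q_L$.

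The only genuinely delicate point is the interchange of limit and integral in part 1, which is precisely where the uniform bound $B_V$ together with the fact that $p_T(\cdot|s,a)\,\dif s'$ is a probability measure is used; that aside, the proof amounts to bookkeeping the ordering so that continuity of $Q_L$ is available when Dini's theorem is applied, and everything else is routine.
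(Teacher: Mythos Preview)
Your proof is correct. Parts 2 and 3 match the paper exactly (mapping property of $W$, then Dini's theorem on the compact space $\mathcal{S}\times\mathcal{A}$). For part 1, however, you take a shorter route than the paper. The paper argues by writing $Q_L - W(V_L) = (Q_L - Q_n) + (W(V_n) - W(V_L))$ and then bounds the second term via Severini--Egorov: it extracts a set $S_\epsilon$ of nearly full $p_T(\cdot|s,a)$-measure on which $V_n \to V_L$ uniformly, and controls the remainder using the bound $B_V$. You bypass this by applying the monotone (or dominated) convergence theorem directly to $\int_{\mathcal{S}} V_n(s')\,p_T(s'|s,a)\,\dif s'$, which is legitimate because $V_n \nearrow V_L$ pointwise, $0 \le V_n \le B_V$, and $p_T(\cdot|s,a)\cdot\mu_S$ is a probability measure. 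Your argument is more elementary and avoids the Egorov machinery entirely; the paper's approach would still work under weaker hypotheses (mere pointwise convergence without monotonicity), but given that monotonicity of $V_n$ is already established before the lemma, your use of MCT is the natural choice.
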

\begin{proof}
\textbf{1.} Fix $(s,a) \in \mathcal{S}\times \mathcal{A}$. We aim to show $Q_L(s,a) - [W(V_L)](s,a) = 0$.
Since $Q_n = W(V_n)$, we can write:
\begin{multline*}
Q_L(s,a) - [W(V_L)](s,a) = \\
\begin{aligned}
&=
Q_L(s,a) - Q_n(s,a)
\\&\quad\quad\quad\quad
- [W(V_L)](s,a) + [W(V_n)](s,a)
\\
&\leq
| Q_L(s,a) - Q_n(s,a) |
\\&\quad\quad\quad\quad
+ |[W(V_L)](s,a) - [W(V_n)](s,a)|.
\end{aligned}
\end{multline*}
The first part can be made arbitrarily small as $Q_n(s,a) \rightarrow Q_L(s,a)$. Consider the second part and fix $\epsilon > 0$. Since $V_n \rightarrow V_L$ point-wise, from Severini-Egorov's theorem~\citep{severini} there exists $S_{\epsilon} \subset S$ with $(p_T(\cdot|s,a) \cdot \mu_S)(S_{\epsilon}^c) < \epsilon$ such that
$\|V_n -V_L \|_{\infty} \rightarrow 0$ on $S_{\epsilon}$. Thus there exists $n_0$ such that
$\|V_n -V_L \|_{\infty} < \epsilon$ for all $n > n_0$. Now let us rewrite the second part for $n > n_0$:
\begin{multline*}
|[W(V_L)](s,a) - [W(V_n)](s,a)|
\\
\begin{aligned}
&\leq
\int_{\mathcal{S}} |V_L(s') - V_n(s')| p_T(s'|s,a) \dif \mu_S(s')
\\
&=
\int_{S_{\epsilon}} |V_L(s') - V_n(s')| p_T(s'|s,a) \dif \mu_S(s')\\
&\quad+
\int_{S_{\epsilon}^c} |V_L(s') - V_n(s')|p_T(s'|s,a) \dif \mu_S(s')
\\
&\leq
\|V_L - V_n\|_{\infty}  + B_V \int_{S_{\epsilon}^c}  p_T(s'|s,a)  \dif \mu_S(s')
\\
&\leq
\epsilon + B_V \epsilon,
\end{aligned}
\end{multline*}
which can be made arbitrarily small.\\
\textbf{2.} $Q_L$ is continuous because $W$ maps all bounded measurable functions to continuous functions.\\
\textbf{3.} Since $Q_n$ and $Q_L$ are continuous functions in a compact space and $Q_n$ is a monotonically increasing sequence that converges point-wise to $Q_L$, we can apply Dini's theorem (see Th. 7.13 on page 150 in~\cite{rudin1976principles}) which ensures uniform convergence of $Q_n$ to $Q_L$.
\end{proof}
\begin{lemma}
\label{le:greedylim}
Let $\pi_n$ be a sequence generated by $\pi_n := B(\pi_{n-1})$. Let $\pi_0$ be continuous in actions and $\forall s \in \mathcal{S}$, $\forall a \in \mathcal{A}$, $\pi_0(a|s)>0$. Define $M(s):= \argmax Q_L(\cdot|s)$. Then $\forall \pi_L \in \Pi_L \neq \emptyset$,  $\forall s \in \mathcal{S}$, we have $\pi_n(\cdot|s)\cdot\mu_A \rightarrow^{w(M(s))} \pi_L(\cdot|s) (\in \mathcal{P}(M(s)))$.
\end{lemma}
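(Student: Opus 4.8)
The plan is to exploit the closed form from Theorem~\ref{th:multbyQ}: iterating $B$ gives, for every $s$ and every $n\ge 1$,
\[
\pi_n(a|s) = \pi_0(a|s)\,\frac{\prod_{k=0}^{n-1} Q_k(s,a)}{\prod_{k=0}^{n-1} V_k(s)},
\]
which is well defined since $V_k(s) > 0$ (positive rewards) and $\pi_0(a|s) > 0$. Fix $s \in \mathcal{S}$. First I would reduce the claimed relative weak convergence to a concentration statement. The set $M(s) = \argmax_a Q_L(s,a)$ is nonempty and compact, since $Q_L$ is continuous by Lemma~\ref{le:bellwl} and $\mathcal{A}$ is compact, so the quotient $\tilde{\mathcal{A}}$ of Definition~\ref{de:Rweakconv} and Lemma~\ref{le:quotient} is defined; write $\nu$ for the quotient map and $\star := \nu(M(s))$ for the collapsed point. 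Since $\pi_L(\cdot|s) \in \mathcal{P}(M(s))$ the image measure $\nu(\pi_L(\cdot|s))$ is the Dirac mass $\delta_\star$, so by Definition~\ref{de:Rweakconv} it suffices to show $\nu(\pi_n(\cdot|s)\cdot\mu_A) \to \delta_\star$ weakly on the metric space $\tilde{\mathcal{A}}$. By the portmanteau theorem this is equivalent to $\limsup_n (\pi_n(\cdot|s)\cdot\mu_A)(C) = 0$ for every closed $C \subseteq \mathcal{A}$ with $C \cap M(s) = \emptyset$: a closed $\tilde C \subseteq \tilde{\mathcal{A}}$ with $\star \notin \tilde C$ pulls back under the continuous map $\nu$ to a closed $C = \nu^{-1}(\tilde C) \subseteq \mathcal{A}$ disjoint from $M(s) = \nu^{-1}(\star)$, while for closed $\tilde C \ni \star$ the bound $\limsup \le 1 = \delta_\star(\tilde C)$ is automatic. (Nonemptiness of $\Pi_L$ is separate: apply the Kuratowski--Ryll-Nardzewski measurable selection theorem to the measurable, compact-valued correspondence $s \mapsto M(s)$ to get a measurable greedy kernel.)

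For the concentration estimate, fix such a nonempty $C$ (if $C=\emptyset$ there is nothing to prove). Since $Q_L(s,\cdot)$ is continuous and $C$ is compact and disjoint from the argmax set, $\max_{a\in C} Q_L(s,a) = m(s) - \delta$ for some $\delta > 0$, where $m(s) := \max_a Q_L(s,a) > 0$. Using $Q_k \le Q_L$ (monotone convergence) I get $\prod_{k=0}^{n-1} Q_k(s,a) \le (m(s)-\delta)^n$ for $a \in C$, hence, with $\|\pi_0(\cdot|s)\|_\infty < \infty$ by continuity of $\pi_0$ on the compact set $\mathcal{A}$,
\[
(\pi_n(\cdot|s)\cdot\mu_A)(C) \;\le\; \|\pi_0(\cdot|s)\|_\infty\,\mu_A(\mathcal{A})\;\frac{(m(s)-\delta)^n}{\prod_{k=0}^{n-1}V_k(s)}.
\]
The crux is a matching lower bound on the normalizer $\prod_{k=0}^{n-1} V_k(s)$. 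For this I would use the open neighborhood $U := \{a : Q_L(s,a) > m(s) - \delta/3\} \supseteq M(s)$, which has $\mu_A(U) > 0$ because $\mu_A$ is strictly positive on $\mathcal{A}$. The \emph{uniform} convergence $Q_n \to Q_L$ from Lemma~\ref{le:bellwl} provides a $k_0$ with $Q_k(s,a) > m(s) - 2\delta/3$ on $U$ for all $k \ge k_0$; together with $Q_k(s,a) \ge \min_{s,a}R(s,a) > 0$ for $k < k_0$ and $\pi_0(\cdot|s) \ge c_0 > 0$ (continuity and positivity of $\pi_0$), restricting the closed form to $U$ and using $\int_{\mathcal{A}} \pi_n(a|s)\,\de a = 1$ yields $\prod_{k=0}^{n-1}V_k(s) \ge \mathrm{const}(s,\delta)\,(m(s)-2\delta/3)^n$.

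Combining the two bounds gives $(\pi_n(\cdot|s)\cdot\mu_A)(C) \le \mathrm{const}(s,\delta)\,\bigl(\tfrac{m(s)-\delta}{m(s)-2\delta/3}\bigr)^n \to 0$, since the ratio is strictly below $1$; the portmanteau criterion of the first paragraph then gives $\pi_n(\cdot|s)\cdot\mu_A \rightarrow^{w(M(s))} \pi_L(\cdot|s)$, and since nothing in the argument depended on which greedy kernel $\pi_L$ was chosen (all elements of $\mathcal{P}(M(s))$ share the image $\delta_\star$ under $\nu$), the conclusion holds for every $\pi_L \in \Pi_L$. I expect the main obstacle to be precisely the lower bound on the normalizer: one cannot put mass directly on $M(s)$, which may be $\mu_A$-null (exactly the ``continuous variable'' situation from the discussion), so one must work with a fixed neighborhood $U$ of positive $\mu_A$-measure and, crucially, invoke \emph{uniform} rather than merely pointwise convergence of $Q_n$ to control $\prod_{k} Q_k$ from below on $U$ uniformly in $k$; the secondary delicacy is the bookkeeping that transports ``weak convergence relative to $F$'' through the quotient construction of Lemma~\ref{le:quotient} and justifies the portmanteau reduction in $\tilde{\mathcal{A}}$.
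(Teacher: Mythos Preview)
Your argument is correct and uses the same core ingredients as the paper's proof: the product representation $\pi_n(a|s)=\pi_0(a|s)\prod_{k<n}Q_k(s,a)/\prod_{k<n}V_k(s)$, the uniform convergence $Q_n\to Q_L$ from Lemma~\ref{le:bellwl}, positivity of $\mu_A$ on a fixed open neighbourhood of $M(s)$, and a resulting geometric ratio strictly below $1$. The organisation, however, differs in two places. First, the paper invokes the \emph{open-set} form of the adapted Portmanteau lemma (Lemma~\ref{le:RPort}), reducing to $\liminf_n(\pi_n\cdot\mu_A)(U)\ge 1$ for open $U\supset M(s)$ and then building, via Lemma~\ref{le:levelsets}, a sub-level set $A_\epsilon$ whose complement sits inside $U$; you work dually with the closed-set form in the quotient, which lets you dispense with Lemma~\ref{le:levelsets} entirely. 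Second, and more substantively, the paper never estimates the normaliser $\prod_k V_k(s)$: it bounds the \emph{ratio} $\pi_n(a|s)/\pi_n(b|s)\le \alpha^n c(a,b)$ for $a\in A_\epsilon$, $b\in B_\epsilon$, then integrates in $a$ and $b$ separately so that the unknown normaliser cancels; this requires a compactness argument to control the continuous function $c(a,b)$ on $A_\epsilon\times B_\epsilon$. Your route---upper-bounding $\int_C\pi_0\prod_k Q_k$ by $(m(s)-\delta)^n$ via $Q_k\le Q_L$, and lower-bounding $\prod_k V_k(s)=\int_{\mathcal{A}}\pi_0\prod_k Q_k$ by restricting to $U$---is arguably cleaner and avoids that auxiliary compactness step, at the cost of not producing the explicit ratio bound $(\pi_n\cdot\mu_A)(A_\epsilon)\le \alpha^n c_m\,\mu_A(A_\epsilon)/\mu_A(B_\epsilon)$ that the paper later reuses verbatim to derive the R-linear rate in Corollary~\ref{le:ratescol}.
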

\begin{proof}
First notice that the set $\Pi_L$ is nonempty\footnote{The argument goes as follows: $H:=\cup_{s\in S}\{s\}\times M(s)$ is a closed set, then $f(s):=\sup M(s)$ is upper semi-continuous and therefore measurable. Then graph of $f$ is measurable so we can define a probability kernel $\pi_L(B|s):=\Indicator_{B}(f(s))$ for all $B$ measurable.}.
Fix $\pi_L \in \Pi_L$ and $s \in \mathcal{S}$. In order to prove that $\pi_n(\cdot|s)\cdot\mu_A \rightarrow^{w(M(s))} \pi_L(\cdot|s)$, we will use a characterization of relative weak convergence that follows from an adaptation of the Portmanteau Lemma~\citep{billingsley2013convergence} (see Appendix~\ref{le:RPort}). In particular, it is enough to show that for all open sets $U \subset \mathcal{A}$ such that $U \cap M(s) = \emptyset$ or such that $M(s) \subset U$, we have that $\liminf_n (\pi_n(\cdot|s)\cdot\mu_A) U \geq \pi_L(\cdot|s) U$.

The case $U \cap M(s) = \emptyset$ is trivial since $\pi_L(\cdot|s)(U) = 0$. For the remaining case $M(s) \subset U$ it holds
$\pi_L(\cdot|s)(U)=1$. Thus we have to prove $\liminf_n (\pi_n(\cdot|s)\cdot \mu_A) U = 1$. If we are able to construct an open set $D \subset U$ such that  $(\pi_n(\cdot|s)\cdot \mu_A)(D) \rightarrow 1$ for $n \rightarrow \infty$, then we will get that $\liminf_n (\pi_n(\cdot|s)\cdot\mu_A) U \geq 1$, satisfying the condition for relative weak convergence of $\pi_n(\cdot|s)\cdot\mu_A \rightarrow^{w(M(s))} \pi_L(\cdot|s)$.

The remainder of the proof will focus on constructing such a set. Fix $a^* \in M(s)$ and $0 < \epsilon < 1/3$.
Define a continuous map $\lambda: \mathcal{A} \rightarrow \mathbb{R}^+$ and closed sets $A_{\epsilon}$ and $B_{\epsilon}$:
\begin{align*}
\lambda(a) &:= \frac{Q_L(a)}{Q_L(a^*)},\\
\quad A_{\epsilon} &:= \{a\in \mathcal{A}| \lambda(a) \leq 1-2\epsilon\},\\
\quad B_{\epsilon} &:= \{a\in \mathcal{A}| \lambda(a) \geq 1-\epsilon\},
\end{align*}
where continuity of the map stems from $Q_L(a^*) > 0$ and continuity of $Q_L$ (Lemma~\ref{le:bellwl}). We will prove that the candidate set is $D=A_{\epsilon}^c$. In particular, we must prove that $A_{\epsilon}^c \subset U$ and that $(\pi_n(\cdot|s)\cdot\mu_A) (A_{\epsilon}) \rightarrow 0$. Using Lemma \ref{le:levelsets} (Appendix) on function $\lambda$, we can choose $\epsilon>0$ such that $A_{\epsilon}^c \subset U$, satisfying the first condition. We are left to prove that $(\pi_n(\cdot|s)\cdot\mu_A) (A_{\epsilon}) \rightarrow 0$.

Assume $A_{\epsilon} \neq \emptyset$ (otherwise the condition is proven): for all $a \in A_{\epsilon}$ and $b \in B_{\epsilon}$ it holds:
\begin{multline*}
\frac{Q_L(a)}{Q_L(b)} =
\frac{\frac{Q_L(a)}{Q_L(a^*)}}{\frac{Q_L(b)}{Q_L(a^*)}}
\leq
\frac{Q_L(a)}{Q_L(a^*)(1-\epsilon)}
\\
\leq
\frac{1-2\epsilon}{1-\epsilon}
=
1-\frac{\epsilon}{1-\epsilon}
=: \alpha_1 < 1.
\end{multline*}
For Lemma~\ref{le:bellwl} $Q_n$ converges uniformly to $Q_L$. Therefore we can fix $n_0>0$ such that $\|Q_n - Q_L\|_{\infty} < \epsilon'$ for all $n \geq n_0$, where we define
$\epsilon' := 0.1\times Q_L(a^*)(1-\epsilon)(1-\alpha_1)$. Now we can
proceed by bounding $Q_n$ ratio from above. For all $n \geq n_0$, $a \in A_{\epsilon}$ and $b \in B_{\epsilon}$:
\begin{align*}
\frac{Q_n(a)}{Q_n(b)} & \leq \frac{Q_L(a)}{Q_L(b)-\epsilon'}
\leq
\frac{Q_L(a)}{Q_L(a^*)(1-\epsilon)-\epsilon'}
\\
&=
\frac{Q_L(a)}{Q_L(a^*)(1-\epsilon)(1-0.1(1-\alpha_1) )} \\
& = \frac{\alpha_1}{(0.9 + 0.1\alpha_1)} =: \alpha < 1.
\end{align*}
Finally, we can bound the policy ratio. For all $n \geq n_0$, $a \in A_{\epsilon}$, $b \in B_{\epsilon}$:
$$
\frac{\pi_n(a|s)}{\pi_n(b|s)}
=
\frac{\pi_0(a|s)}{\pi_0(b|s)}
\prod_{i =0}^{n} \frac{Q_i(s,a)}{Q_i(s,b)}
\leq
\alpha^n c(a,b)
,
$$
where
$$
c(a,b) := \alpha^{-n_0}
\frac{\pi_0(a|s)}{\pi_0(b|s)}
\prod_{i=0}^{n_0} \frac{Q_i(s,a)}{Q_i(s,b)}.
$$
The function $c:A_{\epsilon}\times B_{\epsilon} \rightarrow \mathbb{R}^+$
is continuous as $\pi_0, Q_i$ are continuous (and denominators are non-zero due to
$\pi_0(b|s) > 0$ and $Q_i(s,b)>0$). Since $A_{\epsilon}\times B_{\epsilon}$ is a compact set, there exists $c_m$ such that $c\leq c_m$. Thus we have that for all $n>n_0$:
$$
\pi_n(a|s) \leq \alpha^n c_m \pi_n(b|s).
$$
Integrating with respect to $a$ over $A_{\epsilon}$ and then with respect to $b$
over $B_{\epsilon}$ (using reference measure $\mu_A$ in both cases) we obtain:
\begin{multline*}
(\pi_n(\cdot|s)\cdot\mu_A)(A_{\epsilon})
\times
(\mu_A B_{\epsilon})
\\
\leq \alpha^n c_m
(\pi_n(\cdot|s)\cdot\mu_A)(B_{\epsilon})
\times
(\mu_A A_{\epsilon}).
\end{multline*}
Rearranging terms, we have:
\begin{multline*}
(\pi_n(\cdot|s)\cdot\mu_A)(A_{\epsilon})
\\
\leq
\alpha^n
\left [
c_m
\frac{\mu_A A_{\epsilon}}{\mu_A B_{\epsilon} }
(\pi_n(\cdot|s)\cdot\mu_A) B_{\epsilon}
\right ]
\rightarrow 0, n \rightarrow \infty,
\end{multline*}
since the nominator in brackets is composed by finite measures of sets, thus finite numbers, while the denominator $\mu_A B_{\epsilon}>0$. Indeed, define the open set $C:= \{a\in \mathcal{A}| \lambda(a) > 1-\epsilon\} \subset B_{\epsilon}$. Then $\mu_A (B_{\epsilon}) \geq \mu_A (C) >0$ ($\mu_A$ is strictly positive). To conclude, we have proven that for arbitrarily small $\epsilon >0$, the term $(\pi_n(\cdot|s)\cdot\mu_A)(A_{\epsilon})$ tends to $0$, satisfying the condition for relative weak convergence of $\pi_n(\cdot|s)\cdot\mu_A \rightarrow^{w(M(s))} \pi_L(\cdot|s)$.
\end{proof}
\begin{lemma}
\label{le:Bseqprop}
Assume that, for each $s \in \mathcal{S}$, for each $\pi_L \in \Pi_L$, we have that  $\pi_n(\cdot|s)\cdot\mu_A \rightarrow^{w(M(s))} \pi_L(\cdot|s) (\in \mathcal{P}(M(s)))$. Then this holds:
\begin{equation}
V_L(s) = \int_{\mathcal{A}} Q_L(s,a) \de \pi_L(a|s).
\end{equation}
\end{lemma}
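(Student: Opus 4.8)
The plan is to pass to the limit $n\to\infty$ in the identity $V_n(s)=\int_{\mathcal{A}}Q_n(s,a)\,\de(\pi_n(\cdot|s)\cdot\mu_A)(a)$, which is nothing but $V^{\pi}(s)=\int_{\mathcal{A}}\pi(a|s)Q^{\pi}(s,a)\,\de a$ with $\pi=\pi_n$ rewritten through the measure $f\cdot\mu$. Fix $s\in\mathcal{S}$ and a $\pi_L\in\Pi_L$; by hypothesis $\pi_n(\cdot|s)\cdot\mu_A\rightarrow^{w(M(s))}\pi_L(\cdot|s)$. On the left-hand side $V_n(s)\nearrow V_L(s)$. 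On the right-hand side I first replace $Q_n$ by $Q_L$: since each $\pi_n(\cdot|s)\cdot\mu_A$ is a probability measure and $Q_n\to Q_L$ uniformly by Lemma~\ref{le:bellwl}, the error $\bigl|\int_{\mathcal{A}}Q_n\,\de(\pi_n(\cdot|s)\cdot\mu_A)-\int_{\mathcal{A}}Q_L\,\de(\pi_n(\cdot|s)\cdot\mu_A)\bigr|\le\|Q_n-Q_L\|_{\infty}\to0$. So it suffices to show $\int_{\mathcal{A}}Q_L(s,a)\,\de(\pi_n(\cdot|s)\cdot\mu_A)(a)\to\int_{\mathcal{A}}Q_L(s,a)\,\de\pi_L(a|s)$.

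The key observation is that $Q_L(s,\cdot)$ is continuous on $\mathcal{A}$ (Lemma~\ref{le:bellwl}) and constant on $M(s)=\argmax_a Q_L(s,a)$, where it equals $\max_{a'}Q_L(s,a')$. Hence, with $\nu:\mathcal{A}\to\tilde{\mathcal{A}}$ the quotient map collapsing $M(s)$ to a point (Lemma~\ref{le:quotient}), the function $Q_L(s,\cdot)$ factors as $Q_L(s,\cdot)=\tilde{Q}_L\circ\nu$ for a well-defined $\tilde{Q}_L:\tilde{\mathcal{A}}\to\mathbb{R}$, which is continuous because $\nu$ is a quotient map and $\tilde{\mathcal{A}}$ is metric, and bounded because $\tilde{\mathcal{A}}=\nu(\mathcal{A})$ is compact. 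By the change-of-variables formula for image measures, $\int_{\mathcal{A}}Q_L(s,\cdot)\,\de P=\int_{\tilde{\mathcal{A}}}\tilde{Q}_L\,\de(\nu P)$ for every probability measure $P$ on $\mathcal{A}$. By Definition~\ref{de:Rweakconv}, relative weak convergence means $\nu(\pi_n(\cdot|s)\cdot\mu_A)\rightarrow^{w}\nu(\pi_L(\cdot|s))$, and since these are probability measures on the metric space $\tilde{\mathcal{A}}$ and $\tilde{Q}_L$ is bounded and continuous, $\int_{\tilde{\mathcal{A}}}\tilde{Q}_L\,\de(\nu(\pi_n(\cdot|s)\cdot\mu_A))\to\int_{\tilde{\mathcal{A}}}\tilde{Q}_L\,\de(\nu(\pi_L(\cdot|s)))$. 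Undoing the change of variables at both ends gives the required convergence, and chaining everything yields $V_L(s)=\lim_n V_n(s)=\lim_n\int_{\mathcal{A}}Q_L\,\de(\pi_n(\cdot|s)\cdot\mu_A)=\int_{\mathcal{A}}Q_L(s,a)\,\de\pi_L(a|s)$. (There is no ambiguity on the right over the choice of $\pi_L\in\Pi_L$, since each $\pi_L(\cdot|s)$ is supported on $M(s)$, where $Q_L(s,\cdot)$ is constant.)

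The main obstacle is the quotient bookkeeping in the second step: one must verify carefully that $Q_L(s,\cdot)$ descends to a genuinely continuous (hence, on the compact quotient, bounded) function $\tilde{Q}_L$ — this is exactly where constancy of $Q_L(s,\cdot)$ on $M(s)$ and metrizability of $\tilde{\mathcal{A}}$ from Lemma~\ref{le:quotient} are needed — and then push the abstract relative weak convergence through the image-measure identity back to integrals against $\pi_n(\cdot|s)\cdot\mu_A$. By comparison the uniform-convergence swap of $Q_n$ for $Q_L$ is routine, requiring only $\|Q_n-Q_L\|_{\infty}\to0$ and unit total mass. As an alternative to the quotient route, one can argue directly from the estimate established inside the proof of Lemma~\ref{le:greedylim}, namely $(\pi_n(\cdot|s)\cdot\mu_A)(A_{\epsilon})\to0$ for small $\epsilon$ with $A_{\epsilon}=\{a:Q_L(s,a)\le(1-2\epsilon)\max_{a'}Q_L(s,a')\}$, and sandwich $\int_{\mathcal{A}}Q_L(s,\cdot)\,\de(\pi_n(\cdot|s)\cdot\mu_A)$ between $(1-2\epsilon)\max_{a'}Q_L(s,a')\,\bigl(1-(\pi_n(\cdot|s)\cdot\mu_A)(A_{\epsilon})\bigr)$ and $\max_{a'}Q_L(s,a')$, then let $\epsilon\to0$.
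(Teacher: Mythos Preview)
Your proof is correct and follows essentially the same route as the paper: split via $V_n(s)=\int Q_n\,\de(\pi_n\cdot\mu_A)$, handle $V_n\to V_L$, swap $Q_n$ for $Q_L$ by uniform convergence, and then pass the integral of $Q_L$ through the relative weak convergence using that $Q_L(s,\cdot)$ is continuous and constant on $M(s)$. The only cosmetic difference is that the paper invokes its adapted Portmanteau lemma (condition ``$P_nf\to Pf$ for all continuous $f$ constant on $F$'') directly, whereas you unpack that condition by factoring $Q_L(s,\cdot)=\tilde{Q}_L\circ\nu$ and applying ordinary weak convergence of the image measures; these are the same argument at different levels of abstraction.
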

\begin{proof}
Fix $s \in \mathcal{S}$ and $\pi_L \in \Pi_L$. We aim to show $V_L(s) - \int_{\mathcal{A}} Q_L(s,a) \de \pi_L(a|s) = 0$.
Since $V_n(s) - \int_{\mathcal{A}} Q_n(s,a) \pi_n(a|s) \de \mu_A(a) = 0$, we have:
\begin{multline*}
\Bigl| V_L(s) - \int_{\mathcal{A}} Q_L(s,a) \de \pi_L(a|s) \Bigr|\\
\begin{aligned}
& = \Bigl| V_L(s) - V_n(s) - \int_{\mathcal{A}} Q_L(s,a) \de \pi_L(a|s)
\\&\quad+ \int_{\mathcal{A}} Q_n(s,a) \pi_n(a|s) \de \mu_A(a) \Bigr|
\\
&\leq
\Bigl| V_L(s) - V_n(s) \Bigr| + \Bigl| \int_{\mathcal{A}} Q_L(s,a) \de \pi_L(a|s)
\\&\quad- \int_{\mathcal{A}} Q_n(s,a) \pi_n(a|s) \de \mu_A(a) \Bigr|.
\end{aligned}
\end{multline*}
The first part can be made arbitrarily small due to $V_n(s) \rightarrow V_L(s)$. For the second part:
\begin{align*}
&\Bigl| \int_{\mathcal{A}} Q_L(s,a) \de \pi_L(a|s)  - \int_{\mathcal{A}} Q_n(s,a) \pi_n(a|s) \de \mu_A(a) \Bigr|\\
&= \Bigl| \int_{\mathcal{A}} Q_L(s,a)\dif\pi_L(a|s) - \int_{\mathcal{A}} Q_L(s,a) \pi_n(a|s) \dif\mu_A(a)
\\&\quad + \int_{\mathcal{A}} Q_L(s,a) \pi_n(a|s) \dif\mu_A(a)
\\&\quad - \int_{\mathcal{A}} Q_n(s,a) \pi_n(a|s) \de \mu_A(a) \Bigr|
\\&\leq
\Bigl|\int_{\mathcal{A}} Q_L(s,a)\dif\pi_L(a|s)
-\int_{\mathcal{A}} Q_L(s,a)\pi_n(a|s)\dif\mu_A(a)\Bigr|
\\&\quad + \int_{\mathcal{A}} | Q_L(s,a)- Q_n(s,a)| \pi_n(a|s) \dif\mu_A(a),
\end{align*}
where the first term tends to zero since $\pi_n(\cdot|s)\cdot\mu_A \rightarrow^{w(M(s))} \pi_L(\cdot|s)$ and $Q_L$ is continuous and constant on $M(s)$, satisfying the conditions of the adapted Portmanteau Lemma~\citep{billingsley2013convergence} (see Appendix~\ref{le:RPort}).
The second term can be arbitrarily small since Lemma~\ref{le:bellwl} ensures uniform convergence of $Q_n$ to $Q_L$.
\end{proof}
\begin{restatable}[]{thr}{ct}
\label{th:CT}
Let $\pi_n$ be a sequence generated by $\pi_n := B(\pi_{n-1})$. Let $\pi_0$ be such that $\forall s \in \mathcal{S}$, $\forall a \in \mathcal{A}$ $\pi_0(a|s)>0$ and continuous in actions. Then $\forall s\in \mathcal{S} \quad \pi_n(\cdot|s)\cdot\mu_A \rightarrow^{w(M(s))} \pi_L(\cdot|s)$, where $\pi_L  \in \Pi_L$ is an optimal policy for the MDP. Moreover, $\lim_{n \rightarrow \infty} V_n = V_L$, $\lim_{n \rightarrow \infty} Q_n = Q_L$ are the optimal state and action value functions.
\end{restatable}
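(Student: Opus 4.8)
The plan is to chain the three preceding lemmas and then invoke the contraction property of the Bellman optimality operator $T$; Theorem~\ref{th:CT} is essentially their corollary. \textbf{Step 1 (weak convergence of policies).} The hypotheses on $\pi_0$ in Theorem~\ref{th:CT} --- strictly positive and continuous in actions --- are exactly those of Lemma~\ref{le:greedylim}, whose conclusion gives, for \emph{every} $\pi_L \in \Pi_L$ (a set that is nonempty by the kernel construction in that lemma's footnote) and every $s \in \mathcal{S}$, the relative weak convergence $\pi_n(\cdot|s)\cdot\mu_A \rightarrow^{w(M(s))} \pi_L(\cdot|s)$. This is the first assertion of the theorem; it remains to show that any such $\pi_L$ is optimal and that $V_L, Q_L$ are the optimal value functions.

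\textbf{Step 2 (identifying $V_L$ with $\max_a Q_L$).} Feeding the convergence from Step~1 into the hypothesis of Lemma~\ref{le:Bseqprop} yields $V_L(s) = \int_{\mathcal{A}} Q_L(s,a)\,\de\pi_L(a|s)$ for all $s$. Since $\pi_L(\cdot|s) \in \mathcal{P}(M(s))$ concentrates all mass on $M(s) = \argmax_a Q_L(s,a)$, and $Q_L(s,\cdot)$ is continuous on the compact set $\mathcal{A}$ (so $M(s)$ is nonempty and $Q_L(s,\cdot)$ is constant, equal to its maximum, there), the integral collapses to $\max_a Q_L(s,a)$. Hence $V_L(s) = \max_a Q_L(s,a)$ for every $s \in \mathcal{S}$.

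\textbf{Step 3 (Bellman optimality and uniqueness).} Part~1 of Lemma~\ref{le:bellwl} states $Q_L = W(V_L)$, i.e. $Q_L(s,a) = R(s,a) + \gamma\int_{\mathcal{S}} V_L(s')\,p_T(s'|s,a)\,\de s'$. Substituting $V_L(s') = \max_{a'} Q_L(s',a')$ gives $Q_L(s,a) = R(s,a) + \gamma\int_{\mathcal{S}} \max_{a'} Q_L(s',a')\,p_T(s'|s,a)\,\de s' = [T(Q_L)](s,a)$, so $Q_L$ is a fixed point of $T$. Since $\gamma \in (0,1)$, $T$ is a $\gamma$-contraction in the sup norm and $Q_L$ is continuous on the compact space $\mathcal{S}\times\mathcal{A}$ (Lemma~\ref{le:bellwl}, part~2), so by the Banach fixed point theorem $Q_L$ is the \emph{unique} fixed point, i.e. $Q_L = Q^*$, the optimal action-value function; consequently $V_L(s) = \max_a Q_L(s,a) = V^*(s)$. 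Finally, $\pi_L$ is greedy with respect to $Q_L = Q^*$ by definition of $\Pi_L$, hence an optimal policy (as a cross-check, $Q_L = W(V_L)$ together with $V_L(s) = \int_{\mathcal{A}} Q_L(s,a)\,\de\pi_L(a|s)$ are the policy-evaluation equations for $\pi_L$, whose unique solution identifies $Q_L = Q^{\pi_L}$ and $V_L = V^{\pi_L}$). The claimed limits $\lim_n Q_n = Q_L$ and $\lim_n V_n = V_L$ are the monotone pointwise limits recorded at the start of this section, uniform for $Q_n$ by Lemma~\ref{le:bellwl}.

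I expect almost no friction: the heavy analytic work --- the uniform convergence $Q_n \to Q_L$, the construction of the shrinking neighbourhoods driving the relative weak convergence, and the passage to the limit in the Bellman consistency relation --- has already been discharged in Lemmas~\ref{le:bellwl}--\ref{le:Bseqprop}. The one point meriting a careful sentence is Step~2's collapse of $\int_{\mathcal{A}} Q_L\,\de\pi_L$ to $\max_a Q_L$, which is precisely where the continuity of $Q_L$ (ensuring the essential supremum over $M(s)$ is genuinely attained and constant) and the support property $\pi_L(\cdot|s)\in\mathcal{P}(M(s))$ --- both built into the relative-weak-convergence framework of Definition~\ref{de:Rweakconv} --- are used.
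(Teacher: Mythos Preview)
Your proposal is correct and follows essentially the same route as the paper: invoke Lemma~\ref{le:greedylim} for the relative weak convergence, feed it into Lemma~\ref{le:Bseqprop}, combine with $Q_L = W(V_L)$ from Lemma~\ref{le:bellwl}, and conclude that $Q_L$ is the unique fixed point of the Bellman optimality operator. The only cosmetic difference is one of emphasis: the paper phrases Step~3 as ``$V_L,Q_L$ are the value functions of $\pi_L$ (fixed points of the Bellman operator for $\pi_L$), and since $\pi_L$ is greedy on $Q_L$ these coincide with the Bellman optimality equations,'' whereas you collapse $\int_{\mathcal{A}} Q_L\,\de\pi_L$ to $\max_a Q_L$ directly and then verify $Q_L = T(Q_L)$ --- but this is the same computation in a different order, and your explicit justification of the collapse via continuity of $Q_L$ and the support condition $\pi_L(\cdot|s)\in\mathcal{P}(M(s))$ is a welcome clarification.
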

\begin{proof}
Fix $\pi_L \in \Pi_L$ (we have already shown that $\Pi_L \neq \emptyset$). Due to Lemma~\ref{le:greedylim}, we know that for all $s\in \mathcal{S}$, $\pi_L(\cdot|s)$ is the relative weak limit $\pi_n(\cdot|s)\cdot\mu_A \rightarrow^{w(M(s))} \pi_L(\cdot|s)$  and further we know that $\pi_L$ is greedy on $Q_L(s,a)$ (from definition of $\Pi_L$). Moreover, thanks to Lemmas~\ref{le:Bseqprop} and~\ref{le:bellwl}, $V_L(s)$ and $Q_L(s,a)$ are the state and action value functions of $\pi_L$ because they are fixed points of the Bellman operator. Since $\pi_L(\cdot|s) \in \mathcal{P}(\argmax_a Q_L(s,a))$, $V_L(s)$ and $Q_L(s,a)$ are also the unique fixed points of Bellman's optimality operator, hence $V_L$, $Q_L$ are optimal value functions and $\pi_L$ is an optimal policy.
\end{proof}

This result has several implications. First, it provides a solid theoretical ground for both previous and future works that are based on RWR \cite{dayan1997using,peters2007reinforcement,peng2019advantage} and lends us some additional understanding regarding the properties of similar algorithms (e.g., \cite{abdolmaleki2018maximum}). It should also be stressed that the results presented herein are for compact state and action spaces: traits of some key domains such as robotic control. In addition to the above, one should also note that the upper bound on $(\pi_n(\cdot|s)\cdot\mu_A)(A_{\epsilon})$ constructed in lemma \ref{le:greedylim} can be used to study convergence orders and convergence rates of RWR. The following corollary, for example, proves R-linear convergence for the special case of finite state and action spaces:

\begin{restatable}{coroll}{rlinearconv}
Under the assumptions of lemma \ref{le:greedylim}, if $\mathcal{S}$ and $\mathcal{A}$ are finite, then $\| V^*- V_n \|_{\infty} = O(\alpha_m^n)$,
where $0 \leq \alpha_m < 1$,
$\alpha_m := \frac{2\lambda_m}{0.9+1.1\lambda_m}$,
and
$\lambda_m := \max_{s\in \mathcal{S}} \max_{a\in \mathcal{A} \setminus M(s)}
\frac{Q^*(s,a)}{V^*(s)}$.
\label{le:ratescol}
\end{restatable}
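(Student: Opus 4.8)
The plan is to specialize the quantitative estimate that is already inside the proof of Lemma~\ref{le:greedylim} to the finite setting, where it collapses into a clean geometric bound on the mass that $\pi_n(\cdot|s)$ puts on suboptimal actions, and then to close the loop with the Bellman contraction to turn that into a bound on $\|V^* - V_n\|_\infty$. First I would fix $\mu_S,\mu_A$ to be counting measures (finite and strictly positive, so the setting of Section~\ref{sec:background} applies), note that $\pi_0$ being ``continuous in actions'' is automatic on a finite space, and invoke Theorem~\ref{th:CT} to identify $Q_L = Q^*$, $V_L = V^*$, and $M(s) = \argmax_a Q^*(s,a)$.

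Next, rather than letting the auxiliary parameter $\epsilon$ in Lemma~\ref{le:greedylim} depend on $s$, I would pick a single $\epsilon := (1-\lambda_m)/2$, where $\lambda_m = \max_{s}\max_{a\notin M(s)} Q^*(s,a)/V^*(s)$; here $\lambda_m < 1$ because a non-maximizing action has $Q^* < V^*$, and $\lambda_m > 0$ because rewards are positive. With this choice one checks that, simultaneously for every $s$, the sets in the lemma satisfy $A_\epsilon(s) = \mathcal{A}\setminus M(s)$ and $B_\epsilon(s) = M(s)$, since $1-2\epsilon = \lambda_m$ separates all suboptimal ratios from $1$. Plugging $\alpha_1 = (1-2\epsilon)/(1-\epsilon) = 2\lambda_m/(1+\lambda_m)$ into $\alpha = \alpha_1/(0.9 + 0.1\alpha_1)$ gives exactly $\alpha_m = 2\lambda_m/(0.9 + 1.1\lambda_m)$, and $0 \le \alpha_m < 1$ follows from $0 < \lambda_m < 1$. (One small wrinkle: Lemma~\ref{le:greedylim} is phrased for $\epsilon < 1/3$ whereas here $\epsilon < 1/2$; I would note that the lemma's argument goes through verbatim for any $\epsilon < 1/2$, or else separately handle $\lambda_m < 1/3$ by taking $\epsilon$ slightly below $1/3$, which only improves the rate.)

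Then I would read off from the proof of Lemma~\ref{le:greedylim} the inequality $(\pi_n(\cdot|s)\cdot\mu_A)(A_\epsilon(s)) \le \alpha_m^n\, c_m(s)\,\frac{\mu_A A_\epsilon(s)}{\mu_A B_\epsilon(s)}\,(\pi_n(\cdot|s)\cdot\mu_A)(B_\epsilon(s))$, valid for $n \ge n_0(s)$. In the finite case every factor on the right is a genuine finite constant: $c_m(s)$ is the maximum of the continuous function $c(a,b)$ over the finite set $A_\epsilon(s)\times B_\epsilon(s)$, $\mu_A A_\epsilon(s)/\mu_A B_\epsilon(s) \le |\mathcal{A}|$, and $(\pi_n(\cdot|s)\cdot\mu_A)(B_\epsilon(s)) \le 1$; and since $\mathcal{S}$ is finite one can take a single $n_0 := \max_s n_0(s)$ and a single $K := |\mathcal{A}|\,\max_s c_m(s)$, yielding $\pi_n(\mathcal{A}\setminus M(s)\,|\,s) \le K\alpha_m^n$ for all $s$ and all $n \ge n_0$. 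It is convenient to also use the uniform value $\epsilon' := 0.1\,(\min_s V^*(s))(1-\epsilon)(1-\alpha_1) > 0$ in the lemma's argument so that $n_0$ is $s$-independent from the start.

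Finally I would close the loop. Using $V_n(s) = \sum_a \pi_n(a|s) Q_n(s,a)$ together with $Q_n(s,a) \le Q^*(s,a) \le V^*(s)$ (monotone convergence $Q_n\nearrow Q^*$ and $V^*(s) = \max_a Q^*(s,a)$) and $Q^*(s,a) = V^*(s)$ for $a\in M(s)$, write $V^*(s) - V_n(s) = \sum_a \pi_n(a|s)(V^*(s) - Q_n(s,a))$ and split over $M(s)$ and its complement: the $M(s)$ part is at most $\|Q^* - Q_n\|_\infty$, the complementary part at most $B_V\,\pi_n(\mathcal{A}\setminus M(s)\,|\,s) \le B_V K\alpha_m^n$. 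Combining with the contraction $\|Q^* - Q_n\|_\infty \le \gamma\|V^* - V_n\|_\infty$ (immediate from $Q_n = W(V_n)$, $Q^* = W(V^*)$ by Lemma~\ref{le:bellwl}, since the reward cancels) and $V^* - V_n \ge 0$ (monotonicity, Theorem~\ref{th:mit}) gives $(1-\gamma)\|V^* - V_n\|_\infty \le B_V K\alpha_m^n$ for $n \ge n_0$, i.e. $\|V^* - V_n\|_\infty \le \tfrac{B_V K}{1-\gamma}\alpha_m^n$; absorbing the finitely many terms $n < n_0$ into the constant gives $\|V^* - V_n\|_\infty = O(\alpha_m^n)$. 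I expect the main obstacle to be the bookkeeping in the second and third paragraphs: verifying that a single $\epsilon$ produces the correct sets $A_\epsilon(s), B_\epsilon(s)$ for every state at once, and that the constant $c_m$ and the index $n_0$ coming out of the proof of Lemma~\ref{le:greedylim} can be made uniform over $\mathcal{S}$; everything after that is routine, provided one remembers to justify or sidestep the $\epsilon < 1/3$ hypothesis.
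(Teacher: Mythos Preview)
Your proposal is correct and follows essentially the same route as the paper: specialize the geometric bound from the proof of Lemma~\ref{le:greedylim} by choosing $\epsilon$ so that $A_\epsilon(s)=\mathcal{A}\setminus M(s)$ and $B_\epsilon(s)=M(s)$, then feed the resulting bound on the sub-optimal mass into a $V^*-V_n$ estimate and close up with the $\gamma$-contraction of $W$. The only differences are stylistic: the paper picks $\epsilon(s)=(1-\lambda_0(s))/2$ per state and then takes $\alpha_m=\max_s\alpha(s)$ (which equals your $\alpha_m$ by monotonicity of $\lambda\mapsto 2\lambda/(0.9+1.1\lambda)$), whereas you fix a single $\epsilon=(1-\lambda_m)/2$ up front; and the paper splits $|V^*(s)-V_n(s)|$ via the intermediate term $\int Q^*\pi_n$, yielding the coefficient $2V_m$, whereas you split $\sum_a\pi_n(a|s)(V^*(s)-Q_n(s,a))$ directly over $M(s)$ and its complement, yielding $B_V$. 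Your handling of the $\epsilon<1/3$ caveat is a nice touch that the paper's own corollary glosses over.
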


A proof of the above is included in the appendix \ref{se:crates}. We observe that in the finite case, $\| V^*- V_n \|_{\infty}$ converges to $0$ R-linearly (i.e., $\| V^*- V_n \|_{\infty}$ is bounded by a Q-linearly converging sequence $\alpha_m^n$). We provide an example of a finite MDP in lemma \ref{le:fexample} which exhibits linear convergence rate, showing that the upper bound from the corollary \ref{le:ratescol} is asymptotically tight in regards to the convergence order. Therefore it is not possible to achieve an order of convergence better than linear. Furthermore, the example in lemma \ref{le:cexample} shows that, in the continuous case, the convergence order could be sub-linear. Appendix~\ref{se:relevance} discusses the motivation of our approach.

\section{Demonstration of RWR Convergence}
\label{sec:exp}

To illustrate that the update scheme of Theorem~\ref{th:multbyQ} converges to the optimal policy, we test it on a simple environment that meets the assumptions of the Theorem. In particular, we ensure that rewards are positive and that there is no function approximations for value functions and policies. In order to meet these criteria, we use the modified four-room gridworld domain~\citep{sutton1999between} shown on the left of Figure~\ref{fig:results}. Here the agent starts in the upper left corner and must navigate to the bottom right corner (i.e., the goal state). In non-goal states actions are restricted to moving one square at each step in any of the four cardinal directions. If the agent tries to move into a square containing a wall, it will remain in place. In the goal state, all actions lead to the agent remaining in place. The agent receives a reward of $1$ when transitioning from a non-goal state to the goal state and a reward of $0.001$ otherwise. The discount-rate is $0.9$ at each step. At each iteration, we use Bellman's updates to obtain a reliable estimate of $Q_n$ and $V_n$, before updating $\pi_n$ using the operator in Theorem~\ref{th:multbyQ}.

\begin{figure}
    \centering
    \includegraphics[width=0.56\columnwidth]{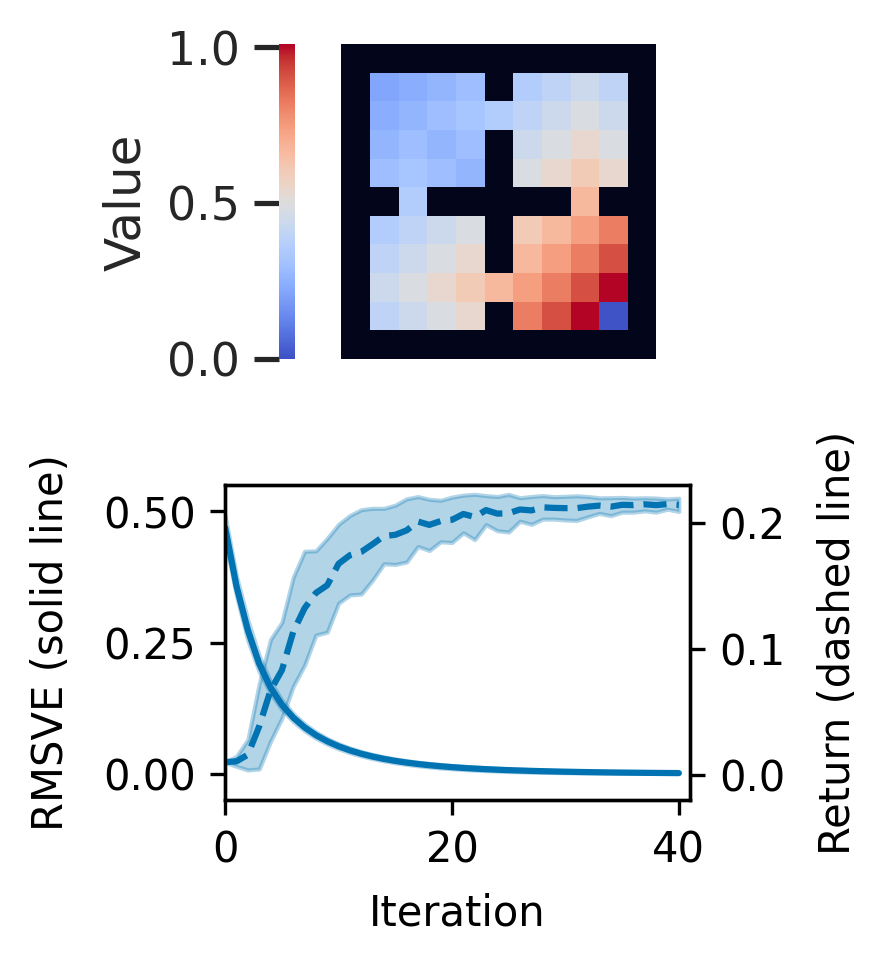}
    \caption{(Top) the value of states under the optimal policy in the four-room gridworld domain. (Bottom) the root-mean-squared value error of reward-weighted regression in the four-room gridworld domain---compared to the optimal policy---and the return obtained by running the learned policy of reward-weighted regression. All lines are averages of 100 runs under different uniform random initial policies. Shading shows standard deviation.}
    \label{fig:results}
\end{figure}

The bottom left of Figure~\ref{fig:results} shows the root-mean-squared value error (RMSVE) of the learned policy at each iteration as compared to the optimal policy, while the bottom right shows the return obtained by the learned policy at each iteration. Smooth convergence can be observed under reward-weighted regression. The source code for this experiment is available at \url{https://github.com/dylanashley/reward-weighted-regression}.

\section{Related Work}
\label{sec:related_work}

The principle behind expectation-maximization was first applied to artificial neural networks by \citet{von1973self}. The reward-weighted regression (RWR) algorithm, though, originated in the work of \citet{peters2007reinforcement} which sought to bring earlier work of \citet{dayan1997using} to the domain of operational space control and reinforcement learning. However, \citet{peters2007reinforcement} only considered the immediate-reward reinforcement learning (RL) setting. This was later extended to the episodic setting separately by \citet{wierstra2008episodic} and then by \citet{kober2011policy}. \citet{wierstra2008episodic} went even further and also extended RWR to partially observable Markov decision processes, and \citet{kober2011policy} applied it to motor learning in robotics. Separately, \citet{wierstra2008fitness} extended RWR to perform fitness maximization for evolutionary methods. \citet{hachiya2009efficient,hachiya2011reward} later found a way of reusing old samples to improve RWR's sample complexity. Much later, \citet{peng2019advantage} modified RWR to produce an algorithm for off-policy RL, using deep neural networks as function approximators.

Other methods based on principles similar to RWR have been proposed. \citet{neumann2009fitted}, for example, proposed a more efficient version of the well-known fitted Q-iteration algorithm~\citep{riedmiller2005neural,ernst2005tree,antos2007fitted} by using what they refer to as \textit{advantaged-weighted regression}---which itself is based on the RWR principle. \citet{ueno2012weighted} later proposed \textit{weighted likelihood policy search} and showed that their method both has guaranteed monotonic increases in the expected reward. \citet{osa2018hierarchical} subsequently proposed a hierarchical RL method called \textit{hierarchical policy search via return-weighted density estimation} and showed that it is closely related to the episodic version of RWR by \cite{kober2011policy}.

Notably, all of the aforementioned works, as well as a number of other proposed similar RL methods (e.g., \citet{peters2010relative}, \citet{neumann2011variational}, \citet{abdolmaleki2018maximum}, \citet{abdolmaleki2018relative}), are based on the expectation-maximization framework of \citet{dempster1977maximum} and are thus known to have monotonic improvements of the policy in the RL setting under certain conditions. However, it has remained an open question under which conditions convergence to the optimal is guaranteed.

\section{Conclusion and Future Work}
\label{sec:conclusion}

We provided the first global convergence proof for Reward-Weighted Regression (RWR) in absence of reward transformation and function approximation. The convergence achieved is linear when using finite state and action spaces and can be sub-linear in the continuous case. We also highlighted problems that may arise under nonlinear reward transformations, potentially resulting in changes to the optimal policy. In real-world problems, access to true value functions may be unrealistic. Future work will study RWR's convergence under function approximation.
In such a case, the best scenario that one can expect is to achieve convergence to a local optimum. One possible approach is to follow a procedure similar to standard policy gradients \cite{Sutton1999} and derive a class of value function approximators that is compatible with the RWR objective. It might be possible then to prove local convergence under value function approximation using stochastic approximation techniques \cite{borkar2009stochastic, gtd, fastgd}. This would require casting the value function and policy updates in a system of equations and studying the convergence of the corresponding ODE under specific assumptions.
Our RWR is on-policy, using only recent data to update the current policy. Future work will also study convergence in challenging off-policy settings (using all past data), which require corrections of the mismatch between state-distributions, typically through a mechanism like Importance Sampling.

\section*{Acknowledgements}
We would like to thank Sjoerd van Steenkiste and Franti\v{s}ek \v{Z}\'{a}k for their insightful comments. This work was supported by the European Research Council (ERC, Advanced Grant Number 742870), the Swiss National Supercomputing Centre (CSCS, Project s1090), and by the Swiss National Science Foundation (Grant Number  200021\_192356, Project NEUSYM). We also thank both the NVIDIA Corporation for donating a DGX-1 as part of the Pioneers of AI Research Award and IBM for donating a Minsky machine.

\bibliography{main}

 \clearpage

 \appendix

 \section{Counterexample}
 \label{ap:CEsection}
 Consider the simple two-armed bandit shown in Figure~\ref{fig:counterexample} with actions $a_0$ and $a_1$, and with \(P(r = 1 | a_0) = 1\), \(P(r = 0 | a_1) = \sfrac{2}{3}\), and \(P(r = 2 | a_1) = \sfrac{1}{3}\). Note that \(q(a_0) = 1 > q(a_1) = \sfrac{2}{3}\). Thus the optimal policy always takes action $a_0$. Now, after applying the transformation $u(r) = e^{\log(3)\,r} = 3^r$, we get \(P(u(r) = 3 | a_0) = 1\), \(P(u(r) = 1 | a_1) = \sfrac{2}{3}\), and \(P(u(r) = 9 | a_1) = \sfrac{1}{3}\). Hence, under transformation $u$, we have \(q(a_0) = 3 < q(a_1) = \sfrac{11}{3}\). So the optimal policy under the transformed rewards always takes action $a_1$, which is sub-optimal, given the original problem.

 \begin{figure*}[t]
     \centering
     \includegraphics[width=1.5\columnwidth]{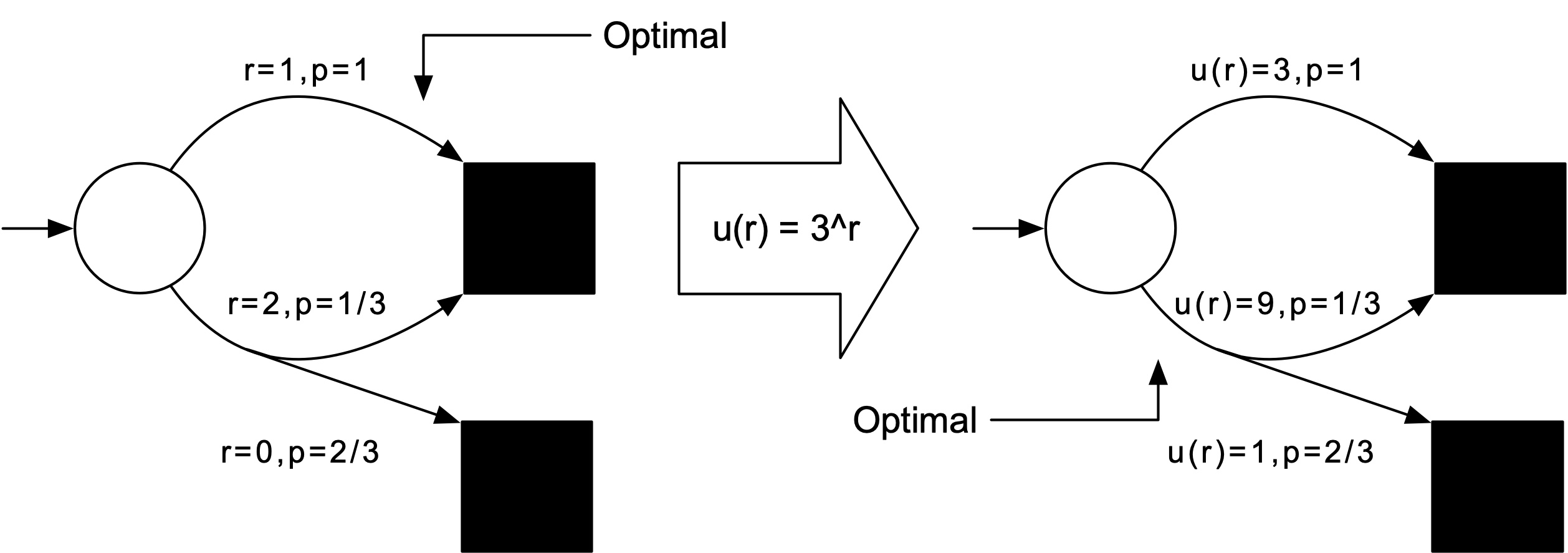}
     \caption{Counterexample demonstrating how applying a naive transformation to the reward function of an MDP may change the optimal policy.}
     \label{fig:counterexample}
 \end{figure*}

 \section{Generalization to zero reward and undiscounted setting}
 \label{ap:MITsection}
 In this section, we study the generalization of the main results of the paper to the case of MDPs with absorbing states (when $\gamma = 1$)
 and when rewards are not strictly positive.
 Full treatment of this topic is outside the scope of this paper and we restrict ourselves here to a discussion of the main problems and possible assumptions.

 \paragraph{Motivation}
 When rewards are not strictly positive (e.g., are allowed to be zero), the requirement for shifting rewards to a positive range can be very inconvenient. In particular, there can be a convergence slowdown (see proof of lemma \ref{le:greedylim}: if we shift $Q_L$ by positive constant, the bound $\alpha_1$ and consequently $\alpha$ gets closer to 1, leading to potentially slower convergence). This was also observed in previous work~\citep{dayan1997using}.

 Moreover, assuming $\gamma < 1$ prevents us from studying many useful cases, e.g. the simple undiscounted fixed horizon case. However, it is possible to model this setting using an MDP with absorbing states. We define this new case as follows:

 \begin{definition}
 \label{de:unibound}
 (MDP with absorbing states, uniform boundedness)
 Let us have a MDP
 $\mathcal{M}= (\mathcal{S},\mathcal{A},p_T,R,\gamma,\mu_0)$ with $\gamma = 1$. The state $s \in \mathcal{S}$ is \textbf{absorbing} if and only if,
 for all actions $a \in \mathcal{A}$ it holds:
 $(p_T(.|s,a)\cdot\mu_S)(\{s\}) = 1$.
 Thus after the MDP enters an absorbing state, it stays there with probability 1.
 Let $S_A \subset \mathcal{S}$ measurable be \textbf{the set of all absorbing states}.
 All other states are called \textbf{transient}.
 We denote by $S_T := \mathcal{S} \setminus S_A$ \textbf{the set of all transient states} (which is also measurable from measurability of $\mathcal{S}$).
 We will consider just MDPs here where there is zero reward from absorbing states (i.e. $\forall s\in S_A, \forall a \in \mathcal{A}, R(s,a)=0$).

 Denote by $\hat{P}_{\pi} : L_{\infty}(S_T) \rightarrow L_{\infty}(S_T)$
 the operator arising from the restriction of $\mathcal{M}$ transition kernel to transient states, for a fixed policy $\pi$:
 \begin{multline*}
 \forall s \in S_T, V \in L_{\infty}(S_T),
 \\
 [\hat{P}_{\pi}(V)](s) := \int_{S_T} V(s') p_{\pi} (s'|s) \dif s',
 \end{multline*}
 where\footnote{It is more flexible to allow here for general policy measures, not just measures dominated by the reference $\mu_A$.}
 $$
 p_{\pi} (s'|s) := \int_{\mathcal{A}} p_{T}(s'|s,a) \dif \pi(a|s)
 .
 $$
 The transition kernel of and MDP $\mathcal{M}$ with absorbing states is called \textbf{uniformly bounded}
 if and only if there exists $k_u \in \mathbb{N}$ and $\alpha < 1$ non-negative such that
 for every sequence of policies $\pi_0,\ldots \pi_{k_u}$ of length $k_u$, it holds:
 $$
 \| \hat{P}_{\pi_0}\circ\hat{P}_{\pi_1}\circ\ldots\hat{P}_{\pi_{k_u}} \|_{\infty} < \alpha
 .
 $$
 \end{definition}

 The definition above requires a few comments. One can see that we were strongly motivated by the classical "discrete" case, which we always
 want to include as a sub-case. Finite ("discrete") state space case usually comes with the condition that starting from all $s \in \mathcal{S}$
 we eventually end up in an absorbing state. In the MDP setting we also have to specify a policy, or sequence of policies used $(\pi_{t'})$ (i.e.
 $(p_{s_t|s_0,(\pi_{t'})}(.|s)\cdot\mu_S)(S_A) \rightarrow 1$).
 Unless we are in the finite state and action spaces setting, it is very difficult to establish boundedness of value functions, which we need in the proofs. This motivates the introduction of stronger assumptions like uniform boundedness above.
 Moreover, the uniform boundedness condition allows us to proceed in a similar way as in discounted case.
 The only difference is that the Bellman operator and the Bellman optimality operator
 are generally not contractions anymore: only their restricted versions (to $S_T$) when composed $k_u$ times are contractions. However, the restricted versions of original operators
 inherit all useful "limit" and "fixed point" properties from their $k_u$ composition,
 so the proofs in main paper can be adapted to them.

 Regarding monotonicity, when defining the expression of a new policy in theorem \ref{th:multbyQ}, one has to resort to a piece-wise definition of the $B$-operator, since we cannot rely anymore on the positivity of $V$-values.
 This causes inconvenient discontinuities in resulting policies and $V$-values.
 Furthermore, there is a problem with possible changes of the supports, and a new monotonicity proof has to account for that.

 Finally, possible support changes are problematic when proving lemma \ref{le:greedylim}, where we need to construct policy ratios in order to establish weak convergence. Here, the proof is much more complicated without the the assumption of strict positivity of the reward used in the main text (which now we are no longer assuming).

 \section{Lemmas}
 \label{ap:CTsection}

 This section contains Lemmas used in the convergence proof.

 \begin{lemma}
 \label{le:levelsets}
 (on level sets of continuous function on compact metric space)
 Let $(X,d)$ be a compact metric space and $f:X \rightarrow \mathbb{R}$ be a
 continuous function.
 Furthermore, let $m := \max_{x\in X} f(x)$ and $F := \{x \in X: f(x) = m \}$.
 Then for every open $U \subset X$, $F \subset U$ there exists a $\delta >0$ such that
 $\{x \in X : f(x) > m-\delta \} \subset U$.
 \end{lemma}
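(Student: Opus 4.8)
The plan is to exploit compactness of the complement $U^c$ together with the fact that $f$ attains its maximum there, and then choose $\delta$ to be the gap between $m$ and that restricted maximum. First I would dispose of the trivial case: if $U = X$, then $U^c = \emptyset$ and any $\delta > 0$ works vacuously, so assume $U^c \neq \emptyset$.

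Next, since $X$ is compact and $U$ is open, the set $U^c$ is a closed subset of a compact space, hence compact. The continuous function $f$ restricted to $U^c$ therefore attains its maximum at some point $x_0 \in U^c$; write $M := f(x_0) = \max_{x \in U^c} f(x)$. Because $F \subset U$, we have $x_0 \notin F$, and so by definition of $F$ the value $M = f(x_0)$ is strictly less than $m = \max_{x \in X} f(x)$. Thus $\delta := m - M > 0$ is a genuine positive number.

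It remains to verify that this $\delta$ does the job. Take any $x \in X$ with $f(x) > m - \delta = M$. If $x$ were in $U^c$, then by maximality of $M$ on $U^c$ we would get $f(x) \leq M$, a contradiction; hence $x \in U$. This shows $\{x \in X : f(x) > m - \delta\} \subset U$, completing the proof.

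There is essentially no hard part here: the only thing to be careful about is the degenerate case $U^c = \emptyset$ (handled at the outset) and making sure the strict inequality $M < m$ is used, which is exactly where the hypothesis $F \subset U$ enters. An equivalent route via sequential compactness — assuming the claim fails, picking $x_n \notin U$ with $f(x_n) > m - 1/n$, extracting a convergent subsequence $x_{n_k} \to x^\ast \in U^c$, and concluding $f(x^\ast) = m$ so $x^\ast \in F \subset U$, a contradiction — would work just as well, but the direct argument above is shorter.
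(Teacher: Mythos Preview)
Your proof is correct and rests on the same core idea as the paper's: maximize $f$ over a compact set disjoint from $F$ and take $\delta$ to be the resulting gap. The paper's argument is slightly more roundabout---it first uses compactness of $F$ to build an $\epsilon$-neighborhood $V$ of $F$ with $V \subset U$, then maximizes $f$ over $X\setminus V$ to get $m'$ and sets $\delta = (m-m')/2$---whereas you work directly with $U^c$, which is cleaner and in fact yields a larger admissible $\delta$ (since $U^c \subset X\setminus V$ forces $M \le m'$).
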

 \begin{proof}
 First notice that $m$ is defined correctly as $f$ is a continuous function on a compact space
 and therefore always has a maximum.
 Also, note that $F$ is compact and $F \neq \emptyset$.
 Assume that $f$ is not constant (otherwise the conclusion holds trivially). Now consider an open set $U$ and $F \subset U$. If $U = X$, the Lemma
 holds trivially, thus assume $U \neq X$. From compactness of $F$ we conclude that
 $F$ is $2\epsilon$ isolated from $U^C := X \setminus U$ for some $\epsilon >0$.
 Let us define $V:= \{ x\in X : d(x,F) < \epsilon \} \subset U$ an open set. Further, define
 $m':= \max f(X\setminus V)$. Notice that the definition is correct since $X\setminus V$ is closed and therefore compact and also
 $X\setminus V \neq \emptyset$ as $X\setminus V \supset U^C \neq \emptyset$.
 Further, $m' < m$ as $X\setminus V$ and $F$ are disjoint ($F \subset V$).
 Define $\delta := \frac{m-m'}{2}$. It remains to verify that $W:= \{ x \in X : f(x) > m-\delta \} = \{ x \in X : f(x) > m-\frac{m+m'}{2}\} \subset U$.
 Notice that $f(W) > \frac{m+m'}{2} > m' \geq f(X\setminus V)$. Thus $W$ and $X\setminus V$ must be disjoint
 and therefore $W \subset V \:(\subset U)$.
 \end{proof}

 \begin{lemma}
 \label{le:quotient}
 (quotient of a metric space by a compact subset)
 Let $(X,d)$ be a metric space and $F \subset X$ compact.
 Furthermore, let $\tau$ denote the topology on $X$ induced by the metric $d$.
 Define the equivalence:
 $$
 (\forall x,y \in X\times X ): x\sim y \iff (x=y \vee (x\in F \wedge  y\in F)).
 $$
 Define a (factor) quotient space $\tilde{X} := X/\sim$ and $\nu:X \rightarrow \tilde{X}$
 the canonical projection $\nu(x) := [x]_\sim$.
 \begin{description}
 \item{1.}
 Denote by $\tilde{\tau}$ the quotient topology on $\tilde{X}$ (induced by $\tau$ and $\nu$). Then it holds:
 $$
 \tilde{\tau} = \{ \nu(U) : U \in \tau, ( U \cap F = \emptyset \vee F \subset U )  \}.
 $$
 \item{2.}
 Further, the function
 $\tilde{d}: \tilde{X}\times \tilde{X} \rightarrow \mathbb{R}^+$
 $$
 \tilde{d}([x]_\sim,[y]_\sim) := d(x,y) \wedge (d(x,F)+d(y,F))
 $$
 defines a metric on $\tilde{X}$ and  the
 topology induced by metric $\tilde{d}$ agrees with $\tilde{\tau}$.
 \item{3.} (continuous functions)
 Let $\tilde{f}: \tilde{X} \rightarrow \mathbb{R}$ be a function on $\tilde{X}$. Than it holds:
 $$
 \tilde{f} \in C(\tilde{X}) \iff \tilde{f} \circ \nu \in C(X),
 $$
 so there is a one to one correspondence between continuous functions on $\tilde{X}$ ($C(\tilde{X}$))
 and continuous functions on $X$, which are constant on $F$ (which allow factorisation through $\nu$):
 \begin{multline*}
 \{ f \in C(X) : \exists \tilde{f} \in \mathbb{R}^{\tilde{X}} : f= \tilde{f} \circ \nu \}
 \\
 =
 \{ f \in C(X) : \exists c_f \in \mathbb{R} : f|_{F} = c_f\}.
 \end{multline*}
 \end{description}
 \end{lemma}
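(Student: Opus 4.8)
I would treat the three claims in order, exploiting the very simple structure of the equivalence relation: every class is a singleton $\{x\}$ for $x \notin F$, together with the single class $F$. For Claim~1 I would recall that the quotient topology $\tilde{\tau}$ consists exactly of those $\tilde{U} \subseteq \tilde{X}$ whose preimage $\nu^{-1}(\tilde{U})$ is open in $X$, and that such preimages are precisely the \emph{saturated} open sets, i.e.\ unions of equivalence classes. Because $F$ is the only nontrivial class, a set is saturated iff it either contains all of $F$ or is disjoint from $F$. Hence the open sets of $\tilde{X}$ are exactly the images $\nu(U)$ of open $U \in \tau$ satisfying $U \cap F = \emptyset$ or $F \subseteq U$; here one uses $\nu^{-1}(\nu(U)) = U$ for saturated $U$ to confirm that $\nu(U)$ is open and that distinct saturated opens give distinct images. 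This is the claimed description.

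For Claim~2 I would first check that $\tilde{d}$ is well defined on classes: the only issue is representatives inside $F$, and since $d(x,F)=0$ for $x\in F$ one gets $\tilde{d}([x],[y]) = d(y,F)$ whenever $[x]=F$, independent of the chosen $x\in F$. Symmetry and nonnegativity are immediate. For $\tilde{d}([x],[y])=0 \iff [x]=[y]$ I would use that a compact subset of a metric space is closed, so $d(x,F)=0$ forces $x\in F$; then $\tilde{d}=0$ means either $d(x,y)=0$ (so $x=y$) or $d(x,F)=d(y,F)=0$ (so $x,y\in F$, i.e.\ $[x]=[y]=F$). The substantive axiom is the triangle inequality $\tilde{d}([x],[z]) \le \tilde{d}([x],[y]) + \tilde{d}([y],[z])$, which I would prove by splitting on which branch realises each minimum on the right-hand side. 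Writing $a=d(x,F)$, $b=d(y,F)$, $c=d(z,F)$, the four cases reduce either to the triangle inequality for $d$ (both $d$-branches) or, using nonnegativity of $b$ together with the point-to-set estimates $a\le d(x,y)+b$ and $c\le d(y,z)+b$ (themselves instances of the triangle inequality for point-to-set distance), to the bound $\tilde{d}([x],[z])\le a+c$, which is then dominated.

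It remains, still within Claim~2, to identify the metric topology $\tau_{\tilde{d}}$ with $\tilde{\tau}$. Since $\tilde{d}([x],[y]) \le d(x,y)$, the projection $\nu$ is $1$-Lipschitz, hence continuous, and by maximality of the quotient topology among topologies making $\nu$ continuous this gives $\tau_{\tilde{d}} \subseteq \tilde{\tau}$. For the reverse inclusion I would show every basic open $\nu(U)$ is $\tilde{d}$-open by exhibiting, around each of its points, a $\tilde{d}$-ball inside it. When the point is a singleton class $[x]$ with $x\notin F$, choosing a radius below both $d(x,F)$ and the radius of a $d$-ball of $x$ contained in $U$ works, because $\tilde{d}([x],[y])$ small then forces $d(x,y)$ small. \textbf{The main obstacle is the remaining case}, where the point is the class $F$ itself and $F \subseteq U$: here $B_{\tilde{d}}(F,\rho) = \{[y] : d(y,F) < \rho\}$, so I need a single $\rho>0$ whose $\rho$-neighbourhood of $F$ lies in $U$. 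This is exactly where compactness of $F$ is indispensable: for $U \neq X$ (the case $U=X$ being immediate) the continuous map $p \mapsto d(p,U^c)$ attains a positive minimum on the compact set $F$, and any such minimum serves as $\rho$.

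Finally, for Claim~3 I would invoke the universal property of the quotient map already established: $\nu$ is a quotient map onto $(\tilde{X},\tilde{\tau})=(\tilde{X},\tau_{\tilde{d}})$, so a function $\tilde{f}$ on $\tilde{X}$ is continuous iff $\tilde{f}\circ\nu$ is continuous on $X$, which is the stated equivalence. The displayed set equality then follows by observing that $f\in C(X)$ factors as $\tilde{f}\circ\nu$ iff $f$ is constant on each class, i.e.\ constant on $F$; and the correspondence $\tilde{f} \mapsto \tilde{f}\circ\nu$ is the claimed bijection onto the continuous functions on $X$ that are constant on $F$.
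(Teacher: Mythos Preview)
Your proposal is correct and follows essentially the same route as the paper: both arguments identify $\tilde{\tau}$ with images of saturated opens, verify the metric axioms (the paper omits the triangle-inequality case split you sketch), and establish the crucial inclusion $\tilde{\tau}\subseteq\tau_{\tilde d}$ by finding, for the class $F$ inside an open $\nu(U)$, a uniform $\rho$-collar of $F$ in $U$ via compactness. Your one-line Lipschitz observation $\tilde d([x],[y])\le d(x,y)$ for the inclusion $\tau_{\tilde d}\subseteq\tilde\tau$ is a mild streamlining of the paper's explicit preimage computation for $\tilde d$-balls, but the overall structure and the use of compactness are identical.
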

 Although this result is quite standard and any general topology textbook (e.g. \citet{munkres2000topology}) can serve as
 a reference here, we decided to include also the proof for convenience and completeness.
 The fact that $\tilde{X}$ is a metric space (point (2) of the Lemma)
 is necessary for the application of Portmanteau theorem in the Lemma below.
 Since the explicit form of the metric $\tilde{d}$ (given in point (2))  will not be used anywhere,
 one can also proceed by utilizing metrization theorems. Since the Lemma is intended just for the case $X=\mathcal{A}\:( \subset \mathbb{R}^{n_A})$ (the action space), which is separable, both Uryshon and Nagata-Smirnov metrization theorems  \citep{munkres2000topology} can be used.
 \begin{proof}
 During the proof, we will assume $F\neq \emptyset$. For the case $F = \emptyset$, the Lemma holds trivially.

 \textbf{1.} The quotient topology $\tilde{\tau}$ is the finest topology in which is $\nu$ continuous.
 Suppose $\tilde{U} \in \tilde{\tau}$ (is open in $\tilde{\tau}$) then $U:=\nu^{-1}(\tilde{U})$
 must be open (otherwise $\nu$ would not be continuous). Further,
 due to the equivalence defined, the pre-images under $\nu$ cannot contain $F$ only partially.
 They either contain the whole $F$, or are disjoint with $F$ (in the first case we get $F \subset U$ and
 in the second one we get $F \cap U = \emptyset$).
 This gives us the inclusion $\tilde{\tau} \subset \{ \nu(U) : U \in \tau, ( U \cap F = \emptyset \vee F \subset U )  \}$.
 For the reverse inclusion, assume we have $U \in \tau$.
 Assume $F \subset U$. Then the pre-image $\nu^{-1}(\nu(U)) = U$ (the result would be different from $U$ just
 when $U$ includes $F$ only partially), which is an open set. Thus, from the fact that $\tilde{\tau}$ is the finest topology in
 which $\nu$ is continuous, it follows that
 $\nu(U) \in \tilde{\tau}$. Similarly for $U \cap F = \emptyset$.

 \textbf{2.}
 Now we aim to show that $\tilde{d}$ is a metric on $\tilde{X}$. Notice that the definition is
 correct in the sense that it does not depend on the choice of representants.
 When we assume that both $x,y$ are not in $F$, then the choice of representants is unique.
 So assume that, for example, $x\notin F$,$y\in F$. Then we can choose another representant for $[y]_\sim$,
 but then $\tilde{d}([x]_\sim,[y]_\sim) = d(x,F)$ is independent of $y$.
 Similarly, if  $x,y$ are both in $F$ then $\tilde{d}([x]_\sim,[y]_\sim) = 0$ which again
 does not depend on choice of the representants.
 Non-negativity and symmetry trivially holds.
 First, we consider the property:
 $$
 \tilde{d}([x]_\sim,[y]_\sim) = 0 \iff [x]_\sim = [y]_\sim  ( \iff x \sim y ).
 $$
 Assume $x \sim y$, then either $x=y$ or $x,y \in F$.
 In both cases $\tilde{d}([x]_\sim,[y]_\sim)$ becomes zero.
 Assume $\tilde{d}([x]_\sim,[y]_\sim) = 0$, then $d(x,y) = 0$ or
 $d(x,F)+d(y,F) = 0$, where in the first case we get $x=y$ and in the second case
 (here we use that $F$ is closed) $x,y \in F$. Thus $x\sim y$. The Triangle inequality holds too. The proof follows easily, but is omitted for brevity (it consists of checking multiple cases).

 Finally, we have to show that the topology induced by $\tilde{d}$ agrees with
 $\tilde{\tau}$ (here we will need compactness of $F$).
 First we show that every open set in $\tilde{\tau}$ is also open in the topology induced by
 $\tilde{d}$.
 Let us consider an open set $\tilde{U} \in \tilde{\tau}$. Now let us fix an arbitrary point
 $\tilde{x} \in \tilde{U}$. It suffices to show that there exits $r >0$ such that open ball
 $U_r(\tilde{x}) := \{ \tilde{y} \in \tilde{X} :  \tilde{d}(\tilde{x},\tilde{y}) < r\}
 \subset \tilde{U}$.
 From $\tilde{U} \in \tilde{\tau}$ there exists $U \in \tau$ such that $\nu(U) = \tilde{U}$
 and moreover $F \subset U$ or $F \cap U = \emptyset$.

 Fix $x \in X$ such that $[x]_\sim = \tilde{x}$.
 We start by considering the case $F \subset U$ and $x \in F$.
 Notice that the metric reduces to $\tilde{d}([x]_\sim,[y]_\sim) = d(y,F)$.
 Compactness of $F$ guarantees that there exists $\epsilon > 0$ such that $F$ is
 $\epsilon$ isolated from $U^c := X \setminus U$. So it suffices to choose $r:=\epsilon$.

 For the second case we consider $F \subset U$ and $x \notin F$.
 As $U \setminus F$ is open, there exists a $\delta >0$ such that
 $U_{\delta}(x) := \{ y \in X : d(x,y) < \delta \} \subset U \setminus F$.
 Note that $\nu(U_{\delta}(x))$ is an open set in $\tilde{\tau}$
 (has open pre-image and does not contain $F$) on which the metric simplifies
 to $\tilde{d}([x]_\sim,[y]_\sim) = d(x,y)\quad ( < \delta)$. We conclude that it is an open
 ball in $\tilde{d}$, whole lying in $\tilde{U}$. So it suffices to put $r:=\delta$.

 As final case, assume $F \cap U = \emptyset$. This actually reduces to the second case
 we already considered.

 Finally, for the opposite inclusion it suffices to show that
 every open ball in $\tilde{d}$ is an open set in $\tilde{\tau}$.
 Thus let us fix an $x \in X$ and positive $r>0$ and set $\tilde{U} := U_r(\tilde{x})$.
 In order for $\tilde{U}$ to be open in $\tilde{\tau}$, it must have open pre-image
 \begin{multline*}
 \nu^{-1}(\tilde{U}) =  \{y \in X : \nu(y) \in \tilde{U} \} =\\
 \begin{aligned}
 &=
 \{y \in X : d(x,y) \wedge (d(x,F) + d(y,F)) < r\}
 \\&=
 \{y \in X : d(x,y)  < r\}
 \\&\quad\cup
 \{y \in X : (d(x,F) + d(y,F)) < r\},
 \end{aligned}
 \end{multline*}
 where we end up with a union of two sets, both open in $\tau$, which is again open.
 Thus $\nu^{-1}(\tilde{U})$ is open, so $\tilde{U}$ is open (from $\tilde{\tau}$
 is the finest topology in which $\nu$ is continuous).

 \textbf{3.}
 (Continuous functions)
 Assume $\tilde{f} \in C(\tilde{X})$. Since $\nu$ is continuous, then $\tilde{f} \circ \nu$ is continuous (composition of continuous maps). For the opposite implication, assume $f := \tilde{f} \circ \nu$ is continuous. We have to show that $\tilde{f}$ is continuous. Thus fix an arbitrary open set $V \subset \mathbb{R}$. We have to show that the pre-image $\tilde{U} := \tilde{f}^{-1}(V)$ is open. We know that $U := f^{-1}(V)$ is open from the continuity of $f$ and that $U = f^{-1}(V) = \nu^{-1}(\tilde{U})$,
 that means that the pre-image of $\tilde{U}$ under $\nu$ is open, but $\tilde{\tau}$ is the finest topology in which $\nu$ is continuous, therefore $\tilde{U}$ has to be open.
 \end{proof}

 \begin{lemma}
 \label{le:RPort}
 (Adaptation of Portmanteau theorem conditions to relative weak convergence)
 Let $(X,d)$, $(\tilde{X},\tilde{d})$, $F$, $\nu$ be like above.
 Let $P, P_n, n\in\mathcal{N}$ be probability measures on $\mathcal{B}(X)$.
 Then following conditions are equivalent:
 \begin{description}
 \item{1.}
 $
 P_n \rightarrow^{w(F)} P.
 $
 \item{2.}
 For all continuous $f:X\rightarrow \mathbb{R}$ that are constant on $F$ it holds that
 $
 P_n f \rightarrow P f.
 $
 \item{3.}
 For all $U \subset X$ open satisfying $U \cap F = \emptyset$ or $F \subset U$
 it holds that
 $
 \liminf P_n U \geq P U.
 $
 \end{description}
 \end{lemma}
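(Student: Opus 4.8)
The plan is to reduce everything to the classical Portmanteau theorem applied on the quotient space $\tilde{X}$, and to translate the resulting conditions back to $X$ using the correspondences established in Lemma~\ref{le:quotient}. By Definition~\ref{de:Rweakconv}, condition~1 unfolds to $\nu P_n \rightarrow^{w} \nu P$, i.e.\ ordinary weak convergence of the image (pushforward) measures on $\tilde{X}$. Since Lemma~\ref{le:quotient} (point 2) guarantees that $(\tilde{X},\tilde{d})$ is a metric space, the classical Portmanteau theorem~\citep{billingsley2013convergence} applies to the probability measures $\nu P_n,\nu P$ on $\mathcal{B}(\tilde{X})$. In particular, $\nu P_n \rightarrow^{w} \nu P$ is equivalent both to (a) $(\nu P_n)\tilde{f} \rightarrow (\nu P)\tilde{f}$ for every bounded continuous $\tilde{f}:\tilde{X}\rightarrow\mathbb{R}$, and to (b) $\liminf_n (\nu P_n)(\tilde{U}) \geq (\nu P)(\tilde{U})$ for every open $\tilde{U}\subset\tilde{X}$. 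The only remaining work is to rewrite (a) and (b) as statements about objects on $X$, for which I record the two standard pushforward identities: for measurable $g:\tilde{X}\rightarrow\mathbb{R}$ one has $(\nu P)g = \int_{\tilde{X}} g \,\de(\nu P) = \int_{X} g\circ\nu \,\de P = P(g\circ\nu)$, and for $\tilde{U}\in\mathcal{B}(\tilde{X})$ one has $(\nu P)(\tilde{U}) = P(\nu^{-1}(\tilde{U}))$; the same relations hold with $P$ replaced by each $P_n$.

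For the equivalence $1\Leftrightarrow 2$, I apply the identity $(\nu P)g = P(g\circ\nu)$ with $g=\tilde{f}$ to turn (a) into $P_n(\tilde{f}\circ\nu) \rightarrow P(\tilde{f}\circ\nu)$. By point 3 of Lemma~\ref{le:quotient}, the map $\tilde{f}\mapsto \tilde{f}\circ\nu$ is a bijection between continuous functions on $\tilde{X}$ and continuous functions on $X$ that are constant on $F$; since $\nu$ is surjective this bijection preserves boundedness, and in the setting where this lemma is used ($X=\mathcal{A}$ compact) every continuous function is automatically bounded. Hence letting $\tilde{f}$ range over bounded continuous functions is exactly the same as letting $f:=\tilde{f}\circ\nu$ range over continuous functions constant on $F$, so (a) becomes precisely condition~2.

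For the equivalence $1\Leftrightarrow 3$, I apply $(\nu P)(\tilde{U}) = P(\nu^{-1}(\tilde{U}))$ to rewrite (b) as $\liminf_n P_n(\nu^{-1}(\tilde{U})) \geq P(\nu^{-1}(\tilde{U}))$. By point 1 of Lemma~\ref{le:quotient}, the open sets of $\tilde{X}$ are exactly the sets $\nu(U)$ with $U\subset X$ open and $U\cap F=\emptyset$ or $F\subset U$, and for each such $U$ the argument in that lemma gives $\nu^{-1}(\nu(U))=U$. Therefore, as $\tilde{U}$ ranges over all open subsets of $\tilde{X}$, the preimage $\nu^{-1}(\tilde{U})$ ranges exactly over all open $U\subset X$ satisfying $U\cap F=\emptyset$ or $F\subset U$, and (b) becomes precisely condition~3.

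I expect the content to be almost entirely bookkeeping once Lemma~\ref{le:quotient} is available; the main point to get right is the \emph{exactness} of the two correspondences — that the admissible open sets and the $F$-constant continuous functions on $X$ are in bijection with the open sets and continuous functions on $\tilde{X}$, with no spurious elements on either side — together with the clean identity $\nu^{-1}(\nu(U))=U$ for admissible $U$, which is what lets the open-set condition transfer without any loss. The single analytic caveat is the boundedness hypothesis needed to invoke Portmanteau in (a), which is the one place the argument leans on compactness of the action space; everything else is the standard change-of-variables and pushforward machinery.
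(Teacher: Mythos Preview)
Your proposal is correct and follows essentially the same route as the paper: unfold Definition~\ref{de:Rweakconv} to obtain $\nu P_n \rightarrow^{w} \nu P$, apply the classical Portmanteau theorem on the metric space $(\tilde{X},\tilde{d})$, and then use the pushforward identities together with the bijections from Lemma~\ref{le:quotient} (point~3 for continuous functions constant on $F$, point~1 for admissible open sets with $\nu^{-1}(\nu(U))=U$) to translate the two Portmanteau criteria back to $X$. Your extra remark about boundedness is a welcome clarification the paper leaves implicit.
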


 \begin{proof}
 First we show equivalence of 1. and 2.
 Point 1. is equivalent to
 $
 \nu P_n \rightarrow^{w} \nu P,
 $ (definition \ref{de:Rweakconv})
 which is equivalent to (using Portmanteau theorem):
 $$
 (\forall \tilde{f} \in C(\tilde{X})) : (\nu P_n) \tilde{f} \rightarrow^{} (\nu P) \tilde{f},
 $$
 what can be rewritten using definition of image measure:
 $$
 (\forall \tilde{f} \in C(\tilde{X})) :
 P_n (\tilde{f}\circ\nu) \rightarrow^{} P (\tilde{f}\circ\nu).
 $$
 But from Lemma \ref{le:quotient}
 we already know that there is a one to one
 correspondence between functions in $C(\tilde{X})$ and functions in $C(X)$, which factors
 through $\nu$ (are constant on $F$). Thus it is equivalent to:
 \begin{multline*}
 (\forall f \in C(X)) : \\
 ((\exists c_f \in \mathbb{R}): f|_F = c_f) \implies ( P_n f \rightarrow P f).
 \end{multline*}

 Finally, we show equivalence of 1. and 3.
 Again, point 1. is equivalent (using Portmanteau theorem) to:
 $$
 (\forall \tilde{U} \subset \tilde{X} \:\text{open}) :
 \liminf (\nu P_n) \tilde{U}  \geq (\nu P) \tilde{U}.
 $$
 Using the definition of image measure and the one to one correspondence (see Lemma \ref{le:quotient}) between all open sets in $\tilde{X}$
 and  open sets in $X$ we have that at least one of the two
 conditions $U \cap F = \emptyset$, $F \subset U$ is satisfied. This concludes the result.
 \end{proof}

 \section{Convergence order - finite case}
 \label{se:crates}

 \rlinearconv*

 \begin{proof}
 We will use the upper bound on
 $(\pi_n(\cdot|s)\cdot\mu_A)(A_{\epsilon})$
 constructed in lemma \ref{le:greedylim} for the choice of
 $\epsilon$ below. In the following, $A_{\epsilon}$,$B_{\epsilon}$,$\alpha$,
 $c_m$ and  $n_0$ are defined as in lemma \ref{le:greedylim}.
 Fix an arbitrary $s \in \mathcal{S}$. Like in the lemma \ref{le:greedylim}, for the first part of this proof the dependence on $s$ will not be made explicit.
 Let $\lambda_0 := \max_{a \in \mathcal{A} \setminus M(s)} \lambda(a)$, where the function $\lambda \leq 1$ is defined as in lemma \ref{le:greedylim}.
 Notice that $\lambda_0 < 1$. Indeed, if $\lambda_0 = 1$ then there
 exists $a_0 \in \mathcal{A} \setminus M(s)$ such that
 $\lambda_0(a_0) = 1$ meaning that $a_0 \in M(s)$, which is a contradiction.
 Now define $\epsilon$ such that $2\epsilon = 1-\lambda_0$. Then $A_{\epsilon} = \mathcal{A} \setminus M(s)$, $B_{\epsilon} = M(s)$, and $\alpha = \frac{2\lambda_0}{0.9+1.1\lambda_0} < 1$.

 Furthermore, using the upper bound from lemma \ref{le:greedylim} we have:
 \begin{equation}
 (\pi_n(\cdot|s)\cdot\mu_A)(\mathcal{A} \setminus M(s)) \leq \alpha^n C,
 \text{for}\:n > n_0
 \label{eq:rate1}
 \end{equation}
 where
 $
 C := c_m
 \frac{\mu_A A_{\epsilon}}{\mu_A B_{\epsilon}}
 $.
 Notice that
 $(\pi_n(\cdot|s)\cdot\mu_A)(B_{\epsilon}) \leq 1$.

 Now we drop the assumption of fixed $s\in \mathcal{S}$
 and we denote the dependence on $s$ explicitly. Starting from Eq. \eqref{eq:rate1} we have that $\forall s \in \mathcal{S}$:
 \begin{equation}
 (\pi_n(\cdot|s)\cdot\mu_A)(\mathcal{A} \setminus M(s))
  \leq \alpha^n(s) C(s) \leq \alpha_m^n C_m,
 \label{eq:rate2}
 \end{equation}
 for $n > n_m$,
 where $\alpha_m := \max_{s \in \mathcal{S}} \alpha(s) < 1$,
 $C_m := \max_{s \in \mathcal{S}} C(s)$ and $n_m := \max_{s \in \mathcal{S}} n(s)$.

 Now we have that $\forall s \in \mathcal{S}$:
 \begin{multline}
 |V^*(s) - V_n(s)|  \leq \\
 \begin{aligned}
 &|\int_{\mathcal{A}} Q^*(s,a) \dif\pi^*(a|s)
 - \int_{\mathcal{A}} Q^*(s,a) \pi_n(a|s) \dif a|+\\
 & |\int_{\mathcal{A}} Q^*(s,a) \pi_n(a|s) \dif a
 - \int_{\mathcal{A}} Q_n(s,a) \pi_n(a|s) \dif a|
 .
 \end{aligned}
 \label{eq:rate3}
 \end{multline}

 Using Eq. \eqref{eq:rate2} and the following facts:
 $$
 \begin{gathered}
 \int_{\mathcal{A} \setminus M(s)} Q^*(s,a) \dif\pi^*(a|s) = 0,\\
 V^*(s) = \int_{M(s)} Q^*(s,a) \dif\pi^*(a|s) = \max_{a\in \mathcal{A}}
 Q^*(s,a)
 \end{gathered}
 $$
 the first expression in Eq. \eqref{eq:rate3} can be bounded $\forall s \in \mathcal{S}$:
 \begin{multline*}
 |\int_{\mathcal{A}} Q^*(s,a) \dif\pi^*(a|s)
 - \int_{\mathcal{A}} Q^*(s,a) \pi_n(a|s) \dif a| \\
 \begin{aligned}
 &\leq
 \Big|\int_{\mathcal{A} \setminus M(s)} Q^*(s,a) \dif\pi^*(a|s)
 \\&\quad\quad\quad\quad- \int_{\mathcal{A} \setminus M(s)} Q^*(s,a) \pi_n(a|s) \dif a\Big|
 \\&\quad+\Big|\int_{M(s)} Q^*(s,a) \dif\pi^*(a|s)
 \\&\quad\quad\quad\quad- \int_{M(s)} Q^*(s,a) \pi_n(a|s) \dif a\Big|
 \\&\leq
 \int_{\mathcal{A} \setminus M(s)} Q^*(s,a) \pi_n(a|s) \dif a
 \\&\quad+ V^*(s) - \int_{M(s)} Q^*(s,a) \pi_n(a|s) \dif a
 \\&\leq
 V^*(s) (\pi_n(\cdot|s)\cdot\mu_A)(\mathcal{A} \setminus M(s))
 \\&\quad+ V^*(s) - V^*(s)(\pi_n(\cdot|s)\cdot\mu_A)(M(s))
 \\&=
 V^*(s) (\pi_n(\cdot|s)\cdot\mu_A)(\mathcal{A} \setminus M(s))
 \\&\quad+ V^*(s) - V^*(s)(1-(\pi_n(\cdot|s)\cdot\mu_A)(\mathcal{A} \setminus M(s)))
 \\&=
 2V^*(s) (\pi_n(\cdot|s)\cdot\mu_A)(\mathcal{A} \setminus M(s))
 \\&\leq
 2V_m \alpha_m^n C_m
 \end{aligned}
 \end{multline*}
 for all $n > n_m$, where $V_m = \max_{s\in \mathcal{S}} V^*(s)$.

 The second expression in Eq. \eqref{eq:rate3} can be bounded $\forall s \in \mathcal{S}$:
 \begin{multline*}
 |\int_{\mathcal{A}} Q^*(s,a) \pi_n(a|s) \dif a
 - \int_{\mathcal{A}} Q_n(s,a) \pi_n(a|s) \dif a| \\
 \leq
 \| Q^*-Q_n\|_{\infty}
 \leq
 \| W \|_{\infty} \| V^*-V_n\|_{\infty}
 =
 \gamma \| V^*-V_n\|_{\infty}
 \end{multline*}
 Combining everything together:
 $$
 \| V^*-V_n\|_{\infty}
 \leq
 2V_m \alpha_m^n C_m + \gamma \| V^*-V_n\|_{\infty},
 $$
 which is equivalent to:
 $$
 \| V^*-V_n\|_{\infty}
 \leq \alpha_m^n \frac{2V_m C_m}{1-\gamma}
 .
 $$
 Thus $\| V^*-V_n\|_{\infty} = O(\alpha_m^n)$. Finally, the formulas for $\alpha_m$ and $\lambda_m$ can be obtained from pure substitution.
 \end{proof}

 The result in the corollary can be developed in a slightly more general case than finite state and action spaces. It suffices to assume that $M(s)$ is open (i.e. clopen) for all $s \in \mathcal{S}$ and then to show continuity of e.g. $s \mapsto C(s)$, $s \mapsto \alpha(s)$
 etc. This makes the proof more technical. The development for not open $M(s)$ (general case) is more complex and it is out of scope
 of this paper.

 We can observe that for the finite case $\| V^*- V_n \|_{\infty}$
 converges to $0$ R-linearly, i.e. $\| V^*- V_n \|_{\infty}$
 is bounded by a Q-linearly converging sequence $\alpha_m^n$.
 We finish this section with two examples. The example for a finite MDP introduced in lemma \ref{le:fexample} below
 which exhibits linear convergence rate
 demonstrates that the upper bound from the corollary \ref{le:ratescol}
 is asymptotically tight (i.e. could not be improved) in sense of order of convergence. Therefore a linear order is the best we can achieve. The second example in lemma \ref{le:cexample} further shows that convergence orders encountered in the "continuous case"
 could be much slower (i.e. sub-linear).

 \begin{lemma}(finite example with Q-linear rate)
 \label{le:fexample}
 Consider a two states MDP with one initial state and one goal state. When the goal state is entered the agent stays there forever, independently of action taken.
 In the initial state there is a possibility to chose form two actions
 $\mathcal{A} =\{0,1\}$ which both transit to the goal state.
 The action value function in the initial state is given
 by $q_0  := Q(a=0) := 2, q_1  := Q(a=1) := 1$.

 The space of all policies can be parametrized by the probability of
 the first action $p \in [0,1]$ in the initial state.
 $B$ then can be interpreted as a map $B:[0,1] \rightarrow [0,1]$
 given by
 $$
 B(p) := \frac{p q_0}{p q_0 + (1-p) q_1} = \frac{2p}{1+p}.
 $$

 The following holds:
 \begin{description}
 \item{1.}
 The optimal policy is given by the value $p^* = 1$ of the parameter $p$.
 $B(p) \geq p$ on $p \in [0,1]$ and for $p \in (0,1)$ the inequality
 is strict.
 \item{2.}
 For $0 \leq c \leq 1$ the map $B$ is $\frac{1}{1+c}$-Lipschitz (it is a contraction for $c > 0$)
 on $[c,1]$ with respect to the $|\cdot|$-norm (the norm given by the absolute value), i.e.
 $$
 \max_{p \in [c,1]} \frac{|p^* - Bp|}{|p^* - p|} = \frac{1}{1+c}.
 $$
 Further:
 $$
 \lim_{n \rightarrow \infty}
 \frac{p^* - p_{n+1}}
 {p^* - p_{n}}
 =\frac{1}{2}
 ,
 $$
 where $p_n := B^{\circ n} p_0$ (n-th application of $B$ on some initial condition $0< p_0 < 1$), i.e.
 $p_n = B^{\circ n} p_0$
 converges to  $\rightarrow p^* = 1$ as $n \rightarrow \infty$ Q-linearly (the order of convergence is 1)
 with convergence rate $\frac{1}{2}$.
 The value function $V_n$ (for the policy parametrized by $p_n$)
 converges to the optimal $V^*$ exactly the same way, i.e.
 Q-linearly with convergence rate $\frac{1}{2}$.
 \item{3.}
 Define a strictly decreasing function
 $f:(0,1] \rightarrow (0,1], f(x) := \frac{1-x}{x}$.
 and a metric $d:(0,1]^2 \rightarrow \mathbb{R}^+$, $d(x,y) := |f(x)-f(y)|$. Then $B$ is a $\frac{1}{2}$-contraction on $(0,1]$.
 \end{description}
 \end{lemma}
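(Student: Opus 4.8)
The plan is to verify each part by a short algebraic computation; the one unifying observation --- made explicit in part 3 --- is that $B$ is conjugate, via the change of variable $x \mapsto f(x) = \frac{1-x}{x}$, to the linear contraction $t \mapsto \tfrac12 t$, and every assertion can be read off from this. I would begin with part 1: since $q_0 = 2 > q_1 = 1$, the optimal policy takes action $0$ deterministically, so $p^* = 1$, and $B(1) = \frac{2}{2} = 1$ shows that $p^*$ is a fixed point of $B$. The monotonicity claim follows from the identity
\[
B(p) - p = \frac{2p}{1+p} - p = \frac{p(1-p)}{1+p},
\]
which is $\ge 0$ on $[0,1]$ and $>0$ on $(0,1)$.

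For part 2, the key computation is $p^* - B(p) = 1 - \frac{2p}{1+p} = \frac{1-p}{1+p}$, so that for $p \in [c,1)$,
\[
\frac{|p^* - B(p)|}{|p^* - p|} = \frac{1}{1+p},
\]
which is strictly decreasing in $p$; hence its supremum over $[c,1]$ is $\frac{1}{1+c}$, attained at $p = c$ (at the endpoint $p=1$ the ratio is an indeterminate $0/0$, resolved by continuity as $\lim_{p\to1}\frac{1-B(p)}{1-p} = B'(1) = \tfrac12$, matching $1/(1+p)$ at $p=1$). This is exactly the error-contraction bound $|p^* - B(p)| \le \frac{1}{1+c}|p^* - p|$ on $[c,1]$, and it is $<1$ precisely when $c>0$. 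For the iterates, part 1 shows that for any $p_0 \in (0,1)$ the sequence $(p_n)$ lies in $(0,1)$, is strictly increasing, and is bounded above by $1$, hence converges to a fixed point of the continuous map $B$ in $[0,1]$; the only fixed points are $0$ and $1$, so $p_n \nearrow p^* = 1$. Then
\[
\frac{p^* - p_{n+1}}{p^* - p_n} = \frac{p^* - B(p_n)}{p^* - p_n} = \frac{1}{1+p_n} \longrightarrow \frac{1}{1+p^*} = \tfrac12,
\]
i.e.\ Q-linear convergence with rate $\tfrac12$. Since both actions lead to the zero-reward goal state, $V_n = p_n q_0 + (1-p_n) q_1 = 1 + p_n$ while $V^* = q_0 = 2$, so $V^* - V_n = 1 - p_n = p^* - p_n$, and $V_n$ converges to $V^*$ with the same Q-linear rate $\tfrac12$.

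For part 3, I would first note that $f(x) = \frac{1-x}{x}$ is strictly decreasing, hence injective, on $(0,1]$, so $d(x,y) := |f(x) - f(y)|$ is the pull-back under $f$ of the Euclidean metric on $\mathbb{R}$ and inherits all the metric axioms; also $B$ maps $(0,1]$ into itself, since $\frac{2p}{1+p} \le 1 \iff p \le 1$. The decisive identity is
\[
f\bigl(B(p)\bigr) = \frac{1 - \frac{2p}{1+p}}{\frac{2p}{1+p}} = \frac{1-p}{2p} = \tfrac12 f(p),
\]
i.e.\ $f \circ B = \tfrac12 f$, whence
\[
d\bigl(B(x),B(y)\bigr) = \bigl|f(B(x)) - f(B(y))\bigr| = \tfrac12 \bigl|f(x) - f(y)\bigr| = \tfrac12\, d(x,y),
\]
so $B$ is a $\tfrac12$-contraction --- indeed a similarity of ratio $\tfrac12$ --- on $\bigl((0,1],d\bigr)$.

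I do not expect a genuine obstacle: every claim is a one-line simplification. The only point deserving care is the meaning of the maximum in part 2 at $p=1$, where the ratio is the indeterminate $0/0$ and must be read via continuous extension (equivalently $B'(1) = \tfrac12$); this convention is also what pins down the sharp asymptotic rate $\tfrac12$ of the iterates. The single "idea" in the whole proof is spotting the conjugacy $f \circ B = \tfrac12 f$, after which parts 1--3 are essentially immediate.
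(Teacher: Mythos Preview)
Your proposal is correct and follows essentially the same direct algebraic verification as the paper: the identities $B(p)-p=\frac{p(1-p)}{1+p}$, $\frac{1-B(p)}{1-p}=\frac{1}{1+p}$, and $V^*-V_n=1-p_n$ are exactly those used there. One small difference worth noting: in part~3 the paper only computes the ratio $\frac{d(p^*,Bp)}{d(p^*,p)}=\tfrac12$ (contraction toward the fixed point), whereas your conjugacy identity $f\circ B=\tfrac12 f$ immediately yields $d(Bx,By)=\tfrac12\,d(x,y)$ for \emph{all} pairs, which is the full contraction property the statement actually asserts---so your argument is slightly more complete there.
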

 \begin{proof}
 \textbf{1.} The optimal policy is derived directly from definition of action value function $Q$ and the parameter $p$. Further, $Bp - p = \frac{p(1-p)}{1+p} \geq 0 $ on $[0,1]$ and the strict inequality holds by inspection for $p \in (0,1)$.\\
 \textbf{2.}
 We have that:
 \begin{multline*}
 \max_{p \in [c,1]} \frac{|p^* - Bp|}{|p^* - p|}
 =
 \max_{p \in [c,1]} \frac{|1- \frac{2p}{1+p}|}{|1 - p|}
 \\
 =
 \max_{p \in [c,1]} \frac{|\frac{1-p}{1+p}|}{|1 - p|}
 =
 \max_{p \in [c,1]} \frac{1}{1 + p}
 =
 \frac{1}{1 + c}
 \end{multline*}
 A similar approach is used to prove Q-linear convergence:
 $$
 \lim_{n \rightarrow +\infty}
 \frac{p^* - p_{n+1}}
 {p^* - p_{n}}
 =
 \lim_{p \rightarrow 1-}
 \frac{1 - B(p)}
 {1 - p}
 =
 \lim_{p \rightarrow 1-}
 \frac{\frac{1-p}{1+p}}{1 - p}
 =
 \frac{1}{2}
 .
 $$
 Regarding the convergence of value functions, we have that:
 \begin{multline*}
 V^* - V_n = q_0 - ( q_0 p_n + q_1 (1-p_n) ) =
 \\
 2 - 2 p_n - (1-p_n) = 1- p_n = p^* - p_n
 .
 \end{multline*}
 Therefore the convergence must be exactly the same as for the policies.\\
 \textbf{3.}
 It is easy to verify that $d$ is a metric, e.g. for
 the triangle inequality one can easily see that
 $|f(x)-f(y)| < |f(x)-f(z)| + |f(z)-f(y)|$ for all
 $x,y,z \in (0,1]^3$. In order to prove the contraction property, note that $p=1$ is a fixed point for
 $p \in (0,1)$:
 \begin{multline*}
 \frac{d(p^*,Bp)}{d(p^*,p)}
 =
 \frac{|f(p^*) - f(Bp)|}{|f(p^*) - f(p)|}
 \\
 =
 \frac{|f(Bp)|}{|f(p)|}
 =
 \frac{|\frac{1-\frac{2p}{1+p}}{\frac{2p}{1+p}}|}{|\frac{1-p}{p}|}
 =
 \frac{\frac{1-p}{2p}}{\frac{1-p}{p}}
 =
 \frac{1}{2}.
 \end{multline*}
 \end{proof}

 \begin{lemma}(example in "continuous" action space)
 \label{le:cexample}
 Let us have a two state MDP with one initial state and one goal state. When the goal state is entered the agent stays there forever independent of action taken.
 In the initial state there is a possibility to chose from $\mathcal{A} =[0,1]$ actions which all transit to the goal state.
 The action value function in the initial state is given by $Q^*(a) = a+1$. Assume the reference measure $\mu_A$ is the Lebesgue measure.
 Assume the initial policy density in the initial state to be $\pi_0 = 1$, i.e. the initial policy in the initial state is uniform.\\
 Then the optimal policy measure in the initial state is given by
 $\pi^*(B) = \Indicator_B(1)$ for all $B \in \mathcal{B}(\mathcal{A})$,
 the optimal state-value function in the initial state is $V^* = 2$,
 the policy density in the initial state is $\pi_n = \frac{(n+1)(a+1)^n}{2^{n+1}-1}$ and
 the state-value function in the initial state is
 $V_n = \frac{n+1}{n+2}\frac{2^{n+2}-1}{2^{n+1}-1}$.
 Further,
 $
 | V^* - V_n | = \frac{2^{n+2}-n-3}{(n+2)(2^{n+1}-1)} = \Theta(n^{-1})
 $
 and
 $
 d(\pi^*,\pi_n\cdot\mu_A) = \frac{2^{n+2}-n-3}{(n+2)(2^{n+1}-1)} = \Theta(n^{-1}),
 $
 where $d(P,Q) :=  \sup \{ |Pl -Ql| :  l \in BL(\mathcal{A}), \|l\|_{BL(\mathcal{A})} \leq 1 \}$ for $P,Q$ measures on $\mathcal{A}$,
 where $BL(\mathcal{A})$ stands for the space of all Bounded Lipschitz functions
 on $\mathcal{A}$
 with norm $\|l\|_{BL(\mathcal{A})} := \max \{K_1, 2K_2\}$,
 $K_1 := \sup_{x\neq y} \frac{|l(x)-l(y)|}{|x-y|}$,
 $K_2 := \sup_{x} |l(x)|$.
 The $d$ is a metric
 inducing the same topology on the space of all policy measures (in the initial state)
 as relative (to $M := \argmax Q^* = \{1\}$) weak convergence does.
 \end{lemma}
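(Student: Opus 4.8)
The structural point that makes everything tractable is that the initial state is the only transient state and every action there leads straight into the absorbing goal state; hence the action-value function in the initial state is unchanged by the iteration and equals $Q^{\pi_n}(\cdot)=Q^*(\cdot)=a+1$ throughout, and the $B$-operator reduces on densities in the initial state to $\pi_{n+1}(a) = \frac{(a+1)\pi_n(a)}{\int_0^1 (a'+1)\pi_n(a')\de a'}$. From $Q^*(a)=a+1$ the optimal objects are immediate: $V^* = \max_{a\in[0,1]}(a+1) = 2$, attained only at $a=1$, so the unique optimal policy measure in the initial state is the Dirac mass $\pi^*(B)=\mathbf{1}_B(1)$.

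I would next obtain $\pi_n$ in closed form by induction on $n$. With $\pi_0\equiv 1$, unrolling the recursion gives $\pi_n(a)\propto (a+1)^n$, and normalizing with $\int_0^1 (a+1)^n \de a = \frac{2^{n+1}-1}{n+1}$ yields $\pi_n(a) = \frac{(n+1)(a+1)^n}{2^{n+1}-1}$; the inductive step is a one-line check using $\int_0^1(a+1)^{n+1}\de a = \frac{2^{n+2}-1}{n+2}$. Then $V_n = \int_0^1 (a+1)\pi_n(a)\de a = \frac{n+1}{2^{n+1}-1}\cdot\frac{2^{n+2}-1}{n+2}$, and putting $V^*-V_n = 2-V_n$ over the common denominator $(n+2)(2^{n+1}-1)$ and simplifying the numerator gives $V^*-V_n = \frac{2^{n+2}-n-3}{(n+2)(2^{n+1}-1)}$; since the numerator is $\sim 4\cdot 2^n$ and the denominator $\sim 2n\cdot 2^n$, this is $\Theta(n^{-1})$.

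For the bounded-Lipschitz distance I use that $\pi^*$ is the point mass at $1$ and $\pi_n\cdot\mu_A$ is a probability measure, so $d(\pi^*,\pi_n\cdot\mu_A) = \sup\{\,|\int_0^1(l(1)-l(a))\pi_n(a)\de a| : \|l\|_{BL(\mathcal{A})}\le 1\,\}$. Any admissible $l$ has Lipschitz constant $K_1\le 1$, hence $|l(1)-l(a)|\le 1-a$ on $[0,1]$ and the supremum is $\le \int_0^1(1-a)\pi_n(a)\de a$. The bound is attained by $l(a)=a-\tfrac{1}{2}$, for which $K_1=1$ and $K_2=\sup_{[0,1]}|a-\tfrac{1}{2}|=\tfrac{1}{2}$, so $\|l\|_{BL(\mathcal{A})}=\max\{1,\,2\cdot\tfrac{1}{2}\}=1$, and $l(1)-\int_0^1 l(a)\pi_n(a)\de a = 1-\int_0^1 a\,\pi_n(a)\de a = \int_0^1(1-a)\pi_n(a)\de a$. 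Therefore $d(\pi^*,\pi_n\cdot\mu_A) = \int_0^1(1-a)\pi_n(a)\de a = 1-(V_n-1) = V^*-V_n$, so it equals the expression already computed and is again $\Theta(n^{-1})$. I expect pinning down the extremal test function to be the only delicate step: the factor $2$ in front of $K_2$ rules out $l(a)=a$ and forces the shift to $l(a)=a-\tfrac{1}{2}$; everything else is routine integration.

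Finally, $d$ is a metric by the standard argument: symmetry and the triangle inequality are immediate, and $d(P,Q)=0$ forces $\int l\de P=\int l\de Q$ for every bounded Lipschitz $l$ (by homogeneity), hence for every $l\in C(\mathcal{A})$ since bounded Lipschitz functions are uniformly dense in $C([0,1])$, hence $P=Q$. That $d$ metrizes the relative (to $M=\{1\}$) weak topology follows from Lemma~\ref{le:quotient}: identifying the single point $\{1\}$ makes the quotient map the identity, so relative weak convergence on $[0,1]$ is ordinary weak convergence of probability measures, which the bounded-Lipschitz metric metrizes on the compact --- hence separable --- space $[0,1]$; the modified normalization $\|l\|_{BL(\mathcal{A})}=\max\{K_1,2K_2\}$ only rescales the test-function ball and does not affect the induced topology.
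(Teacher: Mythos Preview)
Your proposal is correct and follows essentially the same route as the paper: closed-form $\pi_n$ by unrolling the recursion, direct computation of $V_n$ and $V^*-V_n$, and the bounded-Lipschitz distance via the upper bound $\int_0^1(1-a)\pi_n(a)\,\de a$ together with a tight test function, plus the observation that $M=\{1\}$ makes relative weak convergence ordinary weak convergence. Your choice $l(a)=a-\tfrac12$ is in fact more careful than the paper's $l(a)=a$, which has $\|l\|_{BL(\mathcal{A})}=\max\{1,2\cdot1\}=2>1$; the constant shift repairs the norm constraint without changing $Pl-Ql$ since both measures are probabilities.
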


 \begin{proof}
 The formula for $\pi^*$ is obtained from the fact that the optimal policy measure must concentrate all the mass on $\argmax Q^*$. $V^* := \pi^*(Q^*) = \int_{\mathcal{A}} Q^*(a) \dif\pi^*(a) = 2$.
 Further, we know that $\pi_n \propto (Q^*)^n \pi_0 = (a+1)^n$, where the normalization factor is $\int_{\mathcal{A}} (a+1)^n \dif a = \frac{2^{n+1}-1}{n+1}$, which gives the formula for
 $\pi_n(a) = \frac{(a+1)^n}{\int_{\mathcal{A}} (a+1)^n \dif a} = \frac{(n+1)(a+1)^n}{2^{n+1}-1}$.
 For the state-value function we obtain:
 $V_n = \int_{\mathcal{A}} Q^*(a) \pi_n(a) \dif a = \frac{(n+1)}{2^{n+1}-1}\int_{\mathcal{A}}  (a+1)^{n+1}\dif a  = \frac{n+1}{n+2}\frac{2^{n+2}-1}{2^{n+1}-1}$.
 From
 \begin{align*}
 |V^* - V_n|
 &= \frac{(n+2)(2^{n+2}-2)-(n+1)(2^{n+2}-1)}{(n+2)(2^{n+1}-1)}
 \\&= \frac{2^{n+2}-n -3}{(n+2)(2^{n+1}-1)}
 \\&= \frac{2}{n}\frac{1-\frac{n+3}{2^{n+2}}}{(1+\frac{2}{n})(1 - \frac{1}{2^{n+1}})}
 \end{align*}
 we see that $|V^* - V_n| = \Theta(n^{-1})$ as
 $$
 \lim_{n \rightarrow \infty} \frac{|V^* - V_n|}{n^{-1}} = 2 \in (0,+\infty).
 $$

 Now we prove the statement about convergence of policy measures $\pi_n\cdot\mu_A$.
 Notice that in the goal state any policy measure is optimal, therefore we can
 resort to discuss convergence just in the initial state.
 Further, the factorisation induced by $M:= \argmax Q^* = \{1\}$ in the definition of relative
 weak convergence has no effect here as $M$ consists of a single point.
 Thus effectively we are left with just ordinary weak convergence of measures
 on a compact separable metric space $\mathcal{A}$.
 This convergence can be described by
 the metric $d$ introduced in the lemma, see \cite{pollard2001user}.
 Both the metric $d$ and weak convergence induce the same topology.
 During the proof we will use the following fact, see \cite{pollard2001user} ($\forall l \in BL(\mathcal{A}), x \in \mathcal{A}, y \in \mathcal{A}$)
 \begin{equation}
 |l(x)-l(y)| \leq \| l \|_{BL(\mathcal{A})} (\min\{ 1, |x-y|\})
 .
 \label{eq:cexampleBL}
 \end{equation}
 Let us first upper bound $d(\pi^*,\pi_n\cdot\mu_A)$:
 \begin{equation}
     d(\pi^*, \pi_n\cdot\mu_A) = \sup_{l \in BL(\mathcal{A}) : \|l\|_{BL(\mathcal{A})} \leq 1} |\pi^*l - (\pi_n\cdot\mu_A) l|
 \end{equation}
 For any $l \in BL(\mathcal{A})$, $\|l\|_{BL(\mathcal{A})} \leq 1$ it holds:
 \begin{multline*}
 |\pi^*l - (\pi_n\cdot\mu_A) l| \\
 \begin{aligned}
 &=
 |l(1) - \int_{\mathcal{A}} l(a) \pi_n(a) \dif a |
 \\&\leq
 \int_{\mathcal{A}} | l(1) - l(a) | \pi_n(a) \dif a
 \\&\leq
 \int_{\mathcal{A}} \| l \|_{BL(\mathcal{A})} (\min \{1,|1-a|\})  \pi_n(a) \dif a
 \\&\leq
 \int_{\mathcal{A}} (1-a)  \pi_n(a) \dif a
 \\&=
 \int_{\mathcal{A}} (2-(a+1))  \frac{(n+1)(a+1)^n}{2^{n+1}-1} \dif a
 \\&=
 \frac{n+1}{2^{n+1}-1}
 (
 2
 \int_{\mathcal{A}} (a+1)^n \dif a
 -
 \int_{\mathcal{A}}  (a+1)^{n+1} \dif a
 )
 \\&=
 \frac{n+1}{2^{n+1}-1}
 (
 2
 \frac{2^{n+1}-1}{n+1}
 -
 \frac{2^{n+2}-1}{n+2}
 )
 \\&=
 \frac{2^{n+2}-n -3}{(n+2)(2^{n+1}-1)}
 .
 \end{aligned}
 \end{multline*}
 Thus $d(\pi^*,\pi_n\cdot\mu_A) \leq \frac{2^{n+2}-n -3}{(n+2)(2^{n+1}-1)}$.
 Now we show that the given bound is tight, it suffices to consider
 $l = a$:
 \begin{align*}
 |\pi^*l - (\pi_n\cdot\mu_A) l|
 &=
 |1 - \int_{\mathcal{A}} a \pi_n(a) \dif a |
 \\&=
 \int_{\mathcal{A}} (1 - a) \pi_n(a) \dif a
 \\&=
 \frac{2^{n+2}-n -3}{(n+2)(2^{n+1}-1)}
 .
 \end{align*}

 Thus we are left with
 $d(\pi^*,\pi_n\cdot\mu_A) = \frac{2^{n+2}-n -3}{(n+2)(2^{n+1}-1)} = \Theta(n^{-1})$.
 \end{proof}

 \section{Motivation of approach}
 \label{se:relevance}

 Here we provide a justification on why we cover the general case of compact state and action spaces and not just the particular case of finite states and actions. Moreover, we comment on the difficulties of using Banach fixed point theorem in our setting.

 \paragraph{Why studying just finite case is not enough}
 The main reason why we study our problem for compact state and action spaces is that we want to cover also the robotic control scenario, which is of great importance today and involves multidimensional "continuous" state and action spaces.
 One could wonder if our results could be easily studied in the finite setting and then extended to the compact case.
 However, the example in lemma \ref{le:fexample} (finite case) proved $O(\alpha_m^n)$ convergence of state-value function to the optimum,
 while the example in lemma \ref{le:cexample} ("continuous" case) showed much slower $O(n^{-1})$ convergence.
 Therefore the intuition coming from the finite case does not apply to the "continuous" one.
 In general, one can always approach the continuous case by discretization. However there is always a discretization error involved which is difficult to study.

 Considering directly the general compact setting avoids this problem, although it necessarily involves measure and topology arguments.

 \paragraph{Why we do not employ Banach fixed point argument}

 Using a Banach contraction argument could simplify the proof a lot. However, it is hard to make the $B$ operator a contraction on a
 complete metric space (these are the assumptions of
 Banach fixed point theorem) in our compact setting.
 Some insights are provided in our examples in Lemmas
 \ref{le:fexample} and \ref{le:cexample}.
 Lemma \ref{le:fexample} demonstrates that the
 $B$ operator is not a contraction on $\mathcal{A}$.
 One has to remove non-optimal deterministic policies with some open neighbourhoods so the resulting space becomes complete again (see point \textbf{2} of the example). This removing has to be performed carefully because the resulting space must be closed under the $B$ operator. In general, it is not trivial to close it under $B$ again since the union of closed sets is not generally closed. This could work for the finite case but the "continuous case" from the example in Lemma \ref{le:cexample} exhibits asymptotic behaviour $\Theta(n^{-1})$ which is not sufficient in order for $B$ to be a contraction. Therefore, removing non-optimal deterministic policies is not enough. One could try to distort the metric purposefully (like in the example in Lemma
 \ref{le:fexample} in point \textbf{3.}), although it is not clear how much this would complicate the proof. The approach we used in the paper appears to be more straightforward.

 \section{Computational Requirements of Demonstration}

 The source code for the demonstration in Section~\ref{sec:exp} is available at \url{https://github.com/dylanashley/reward-weighted-regression/releases/tag/v1.0.0}. The plot shown in Figure~\ref{fig:results} was generated using the source code as executed by \texttt{Python 3.8.11}. The computational requirements of this were are minimal, and generating the plot again from scratch should take under an hour on most modern personal computers.

\end{document}